\newtcolorbox{OuterBracketedParagraph}{
  enhanced,
  breakable,
  sharp corners,
  boxrule=0pt,
  frame hidden,
  colback=white,
  overlay={
    % Left bracket (inside column)
    \draw[line width=1pt, color=blue]
      ([xshift=0mm]frame.north east) -- 
      ([xshift=0mm]frame.south east);
    \draw[line width=1pt, color=blue]
      ([xshift=0mm]frame.north east) -- 
      ([xshift=-2mm]frame.north east);
    \draw[line width=1pt, color=blue]
      ([xshift=0mm]frame.south east) -- 
      ([xshift=-2mm]frame.south east);
  }
}
\DeclareMathOperator{\argmax}{argmax}
\newtheorem{claim}{Claim}
\newtheorem{defn}{Definition}
\newtheorem*{result*}{Theorem}
\definecolor{airforceblue}{rgb}{0.36, 0.54, 0.76}
\definecolor{Lblue}{rgb}{0.36, 0.54, 1}
\newcommand{\bx}{{\mathbf x}}
\newcommand{\by}{\bm{y}}
\newcommand{\bw}{\bm{w}}
\newcommand{\bW}{\bm{w}}
\newcommand{\R}{\mathbb{R}}
\newcommand{\Myparagraph}[1]{\bigskip\textbf{#1}\;}
\newcommand{\myparagraph}[1]{\medskip\textbf{#1}\;}
\newcommand{\app}{{Appendix~}}
\title{Forget Me Not: Fighting Local Overfitting with Knowledge Fusion and Distillation}
\author{
    Uri Stern\textsuperscript{*}, Eli Corn\textsuperscript{*} and Daphna Weinshall\\
    \IEEEauthorblockA{School of Computer Science and Engineering, The Hebrew University of Jerusalem, Jerusalem 91904, Israel\\
    Email: \{ustern@gmail.com, eli.corn@mail.huji.ac.il, daphna@mail.huji.ac.il\}}
\thanks{\textsuperscript{*}Equal contribution}
}
\begin{document}

\maketitle

\begin{abstract}

Overfitting in deep neural networks occurs less frequently than expected. This is a puzzling observation, as theory predicts that greater model capacity should eventually lead to overfitting -- yet this is rarely seen in practice. But what if overfitting does occur, not globally, but in specific sub-regions of the data space? In this work, we introduce a novel score that measures the \emph{forgetting rate} of deep models on validation data, capturing what we term \emph{local overfitting}: a performance degradation confined to certain regions of the input space. We demonstrate that local overfitting can arise even without conventional overfitting, and is closely linked to the double descent phenomenon.
% We empirically demonstrate that local overfitting can occur even in the absence of conventional (global) overfitting. Furthermore, we uncover a connection between local overfitting and the \emph{double descent phenomenon}, showing that the knowledge forgotten during training corresponds to knowledge acquired during the initial descent phase.

% Building on these insights, we propose a new \sout{ensemble} method designed to recover forgotten knowledge using only the training history of a single model. This ensemble is then distilled back into a model of the original size using knowledge distillation, thereby improving performance without increasing inference cost. 
Building on these insights, we introduce a two-stage approach that leverages the training history of a single model to recover and retain forgotten knowledge: first, by aggregating checkpoints into an ensemble, and then by distilling it into a single model of the original size, thus enhancing performance without added inference cost.
Extensive experiments across multiple datasets, modern architectures, and training regimes validate the effectiveness of our approach. Notably, in the presence of label noise, our method -- \emph{Knowledge Fusion} followed by \emph{Knowledge Distillation} -- outperforms both the original model and independently trained ensembles, achieving a rare win-win scenario: reduced training and inference complexity.

\end{abstract}

\begin{IEEEkeywords}
Overfitting, Double Descent, Knowledge Distillation, Noisy Labels, Ensemble Learning
\end{IEEEkeywords}

\section{Introduction}
\label{sec:introduction}

Overfitting a training set is considered a fundamental challenge in machine learning. Theoretical analyses predict that as a model gains additional degrees of freedom, its capacity to fit a given training dataset increases. Consequently, there is a point at which the model becomes too specialized for a particular training set, leading to an increase in its generalization error. In deep learning, one would expect to see increased generalization error as the number of parameters and/or training epochs increases. Surprisingly, even vast deep neural networks with billions of parameters seldom adhere to this expectation, and overfitting as a function of epochs is almost never observed \citep{liu2022convnet}. Typically, a significant increase in the number of parameters still results in enhanced performance, or occasionally in peculiar phenomena like the double descent in test error \citep{annavarapu2021deep}, see Section~\ref{sec:overfitanddoubledescent}. Clearly, there exists a gap between our classical understanding of overfitting and the empirical results observed when training modern neural networks.

To bridge this gap, we present a fresh perspective on overfitting. Instead of solely assessing it through a decline in \emph{validation accuracy}, we propose to monitor what we term the model's \textbf{forget fraction}. This metric quantifies the portion of test data (or validation set) that the model initially classifies correctly but misclassifies as training proceeds (see illustration in Fig.~\ref{fig:illustration}). Throughout this paper we term the decline in test accuracy as \textbf{forgetting}, to emphasize that the model's ability to correctly classify portions of the data is reduced. 

In Section~\ref{sec:overfitanddoubledescent}, we investigate various benchmark datasets, observing this phenomenon even in the absence of overfitting as conventionally defined, i.e., when test accuracy increases throughout. Notably, this occurs in competitive networks despite the implementation of modern techniques to mitigate overfitting, such as data augmentation and dropout. Our empirical investigation also reveals that forgetting of patterns occurs alongside the learning of new patterns in the training set, explaining why the traditional definition of overfitting fails to capture this phenomenon.

\begin{figure}[!t]
    \centering
    \includegraphics[width=0.4\linewidth]{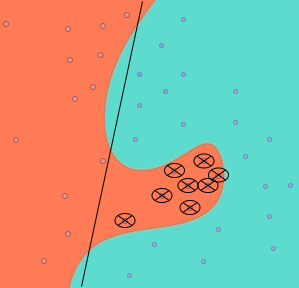} \hspace{0.35cm}
    \includegraphics[width=0.4\linewidth]{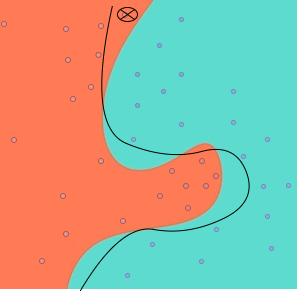}
    \caption{Local overfitting and forgetting in a binary problem, where blue and orange denote the different classes, and circles mark the validation set. The initial (left) and final (right) separators of a hypothetical learning method are shown, where $\otimes$ marks prediction errors. Clearly the final classifier has a smaller generalization error, but now one point at the top is 'forgotten'.}
    \label{fig:illustration}
    \vspace{-0.5cm}
\end{figure}

\begin{table}[t]
\caption{Summary of Method Complexity and Performance}
\label{tab:method_summary}
\begin{center}
\footnotesize

\resizebox{\columnwidth}{!}{%

\begin{tabular}{l|cc|cc}
\toprule
& \multicolumn{2}{c|}{Complexity} & \multicolumn{2}{c}{Ranking} \\
\textbf{Method} & \textbf{Training } & \textbf{Inference } & \textbf{Label Noise} & \textbf{Clean Data} \\
\midrule
Single Model                    & Low  & Low  & 5 & 5 \\
Independent Ensemble            & High & High & 2-3 & \textbf{1} \\
Distilled Independent Ensemble  & High & Low  & 2-3 & 3-4 \\
Knowledge Fusion (KF)           & Low  & High & 4 & 3-4 \\
\textbf{Distilled KF Ensemble} & \textbf{Low}  & \textbf{Low}  & \textbf{1} & 2 \\
\bottomrule

\end{tabular}
}
\end{center}

%\raggedright
\footnotesize{\vspace{0.25cm}Each method is ranked from 1 (best) to 5 (worst) based on its average performance across multiple datasets (see Table~\ref{table:IndependentEnsembleKD}). Note that distilled KF offers the best performance-efficiency trade-off, ranking first with label noise and  second on clean data, while maintaining low complexity.
}
\vspace{-0.25cm}
\end{table}

Based on the empirical observations reported in Section~\ref{sec:overfitanddoubledescent}, we propose in Sections~\ref{sec:method}-\ref{sec:KnowledgeDistillation} a two phase method that can effectively reduce the forgetting of test data, thus improving the final accuracy and reduce overfitting, while maintaining inference efficiency similar to a single model. 

In phase one (Section~\ref{sec:method}), we introduce a new prediction method, \emph{Knowledge Fusion}, that combines knowledge gained in different stages of training. This method delivers a weighted average of the class probability output between the final model and a set of checkpoints of the model from mid-training. The checkpoints and their respective weights are selected iteratively using a validation dataset and our forget metric. The purpose of this phase is two-fold: First, it improves upon the original model, serving as another strong indication that models indeed forget useful knowledge in the late stages of training. Second, it provides a proof-of-concept that this lost knowledge can be recovered.

In phase two (Section~\ref{sec:KnowledgeDistillation}), we address the increased inference complexity introduced by phase 1. Since phase 1 effectively forms an ensemble by aggregating predictions from multiple checkpoints, its memory and compute demands during inference are significantly higher than those of a single model. This overhead can hinder deployment in resource-constrained environments. To solve this, in phase 2 we \emph{condense} the ensemble into a single, compact model through two approaches: (i) \textbf{Knowledge Distillation (KD)}, where a student network is trained to mimic the outputs of the KF ensemble; and (ii) \textbf{Weight Averaging}, a simpler method that directly averages the weights of selected checkpoints. These approaches aim to retain the performance benefits of KF while restoring the efficiency of inference by a single model.

Empirical results, presented in Section~\ref{sec:empirical}, validate the effectiveness of our approach across multiple image classification datasets, with and without label noise. Using various network architectures, including modern networks on ImageNet, the results demonstrate that our method is universally beneficial, consistently improving upon the original model. To ensure a fair comparison, we compare against the original model, which also leverages a validation set to select the best performing checkpoint for early stopping, since this aligns with our method’s use of a validation set. %The results support both goals of reducing forgetting and recovering lost knowledge. 

When compared to alternative methods that leverage the network's training history, our approach delivers comparable or superior performance, while retaining lower inference complexity. When compared to alternative ensemble methods of higher training \textbf{and} inference complexity, the distilled condensed model retains (with a few exceptions), and in some cases surpasses, the performance of alternative methods. This provides a win-win scenario: reduced training and inference costs without sacrificing generalization. %Furthermore, weight averaging offers a simpler alternative that enhances robustness with minimal complexity, especially when full distillation or maintaining an ensemble is impractical. While weight averaging does not achieve the same performance as the full KF ensemble or its distilled counterpart, it still provides noticeable improvements over a single model.

Table~\ref{tab:method_summary} summarizes these results by ranking the aforementioned core methods across two criteria: performance on noisy and clean datasets, and their training/inference complexity. Rankings (1 = best, 5 = worst) are computed by first ranking the methods separately on each dataset and then averaging those ranks, based on accuracy scores from Table~\ref{table:IndependentEnsembleKD} of Section~\ref{sec:empirical}. Notably, the \emph{distilled KF ensemble} achieves the best performance on noisy datasets and strong performance on clean datasets, while maintaining \textbf{low} complexity in both training and inference. In contrast, while the \emph{independent ensemble} performs best on clean data, it incurs significantly higher computational costs. These results underscore the distilled KF ensemble as offering the most favorable trade-off between performance and efficiency across both clean and noisy data settings.

The theoretical investigation of forgotten knowledge is presented in Section~\ref{sec:TheoryForgottenKnowledge}, which adopts the framework of over-parameterized deep linear networks. In \citep{stern2025overfit} it is shown that the theoretical analysis of forgotten points correlates significantly with empirical results, offering insights into local overfitting and the phenomenon of forgotten knowledge.

\subsection*{Our main contributions}
\begin{inparaenum}[(i)]
    \item A novel perspective on overfitting, formalizing the notion of \emph{local overfitting}.
    \item Empirical evidence that overfitting occurs \textbf{locally} even without a decrease in overall generalization.
    \item An effective method to reduce local overfitting, and its empirical validation.
    % \item Two methods that enhance the efficiency of the proposed algorithm, ensuring its practicality without sacrificing performance. \daphna{\sout{this last contribution is vague; better discuss it from the point of view of training and inference complexity}}
    %\item A method that condenses the aforementioned approach to a compact model, 
    This method retains the low training and inference costs of a single model, while preserving superior performance.
\end{inparaenum}

%\cite{} adopts the framework of over-parameterized deep linear networks to investigate forgotten knowledge in deep neural networks. Its analysis of forgotten data points during gradient descent aligns with experimental results, suggesting insights into local overfitting and the phenomenon of forgotten knowledge.

\section{Related Work}
\label{sec:RelatedWork}

\subsection*{Study of forgetting in prior work} Most existing studies examine the forgetting of training data, where certain training points are initially memorized but later forgotten. This typically occurs when the network cannot fully memorize the training set. In contrast, our work focuses on the \textbf{forgetting of validation points}, which arises when the network successfully memorizes the entire training set. Building on \citep{arpit2017closer}, who show that networks first learn "simple" patterns before transitioning to memorizing noisy data, we analyze the later stages of learning, particularly in the context of the double descent phenomenon. Another related but distinct phenomenon is "catastrophic forgetting" \citep{mccloskey1989catastrophic}, which occurs in \emph{continual learning} settings where the training data evolves over time—unlike the static training scenario considered here.
 
\subsection*{Ensemble learning} Ensemble learning has been studied extensively see %\citep{polikar2012ensemble, ganaie2022ensemble, yang2023survey}.
\citep{yang2023survey}. Our methodology is rooted in "implicit ensemble learning", in which only a single network is trained in a way that "mimics" ensemble learning \citep{srivastava2014dropout}. Utilizing checkpoints from the training history as a 'cost-effective' ensemble has also been considered. This was achieved by either considering the last epochs and averaging their probability outputs \citep{xie2013horizontal}, or by employing exponential moving average (EMA) on all the weights throughout training \citep{polyak1992acceleration}. The latter method does not always succeed in reducing overfitting, as discussed in \citep{izmailov2018averaging}.

Several methods %\citep{izmailov2018averaging, garipov2018loss, huang2017snapshot} 
modify the training protocol to converge to multiple local minima, which are then combined into an ensemble classifier. While these approaches show promise \citep{noppitak2022dropcyclic}, they add complexity to training and may even hurt performance \citep{guo2023stochastic}. Our comparisons (see Table~\ref{table:specialmethods}) demonstrate that our simpler method either matches or outperforms these techniques in all studied cases.

\subsection*{Studies of overfitting and double descent} 

Double descent with respect to model size has been studied empirically in  \citep{belkin2019reconciling, nakkiran2021deep}, while epoch-wise double descent (which is the phenomenon analyzed here) was studied in \citep{stephenson2021and, heckel2020early}. These studies analyzed when and how epoch-wise double descent occurs, specifically in data with label noise, and explored ways to avoid it (sometimes at the cost of reduced generalization). Essentially, our research identifies a similar phenomenon in data without label noise. It is complementary to the study of "benign overfitting", e.g., the fact that models can achieve perfect fit to the train data while still obtaining good performance over the test data.

\subsection*{Knowledge distillation} 
\label{sec:related-work-distillation}

Knowledge Distillation (KD) \citep{hinton2015distillingknowledgeneuralnetwork} is a technique in which a smaller "student" model is trained to replicate the behavior of a larger "teacher" model by learning from its output predictions. In traditional KD, the teacher model provides ``soft targets'' for the student to learn from, instead of relying solely on hard labels. This enables the student to capture a more nuanced understanding of the data distribution, which often results in improved performance.

Recall that ensemble learning can significantly increase computational costs at inference time. Knowledge distillation—where the ensemble acts as the teacher and a single model serves as the student—offers a way to reduce these costs, though often with some loss in performance. Interestingly, in certain settings such as those with high label noise, the distilled model can even outperform the ensemble itself \citep{jeong2024understanding,stern2024united}.

Self-distillation is a technique in which a model learns from its own prior predictions, enhancing performance without the need for a separate teacher model. This technique has been shown to enhance the original model's performance, particularly in noisy environments \citep{allen2020towards, pham2022revisiting, das2023understanding}, with several theoretical explanations proposed for its effectiveness.

Somewhat counter-intuitively, recent studies have shown that higher-performing models do not necessarily make better teachers in knowledge distillation. Accordingly, \citep{wang2022efficient} show that an intermediate checkpoint from a teacher's training trajectory can serve as a more effective teacher than the final, fully trained model. %, even when the latter achieves higher predictive accuracy. 
They also demonstrate that ensembles built from a random sample of checkpoints, while under-performing independently trained ensembles, are better able to guide student models. \citep{wang2022learn} expands on this idea by proposing to use a self-attention mechanism to adaptively weight intermediate models during distillation.

Some of our empirical findings mirror these insights, particularly the superior performance of ensembles composed of intermediate checkpoints over independently trained models in knowledge distillation. While \citep{wang2022learn} primarily addresses the problem of how to weight each randomly sampled checkpoint, we focus on a complementary challenge - the effective selection of checkpoints - guided by the notion of forgotten knowledge. Their proposed adaptive weighting technique further adds complexity to the training. Consequently, and given the absence of publicly available code, our empirical comparisons are limited to a classical checkpoint sampling method (see Table~\ref{table:ema}). Nonetheless, we note that the adaptive weighting of checkpoints can be incorporated into our approach as a complementary enhancement.

\section{Overfitting Revisited}
\label{sec:overfitanddoubledescent}

\begin{figure*}[t]
    \centering
    % \subfloat[Cifar100, 20\% sym noise\label{subfig:DDc100sym20}]{
    %     \includegraphics[width=0.24\textwidth]{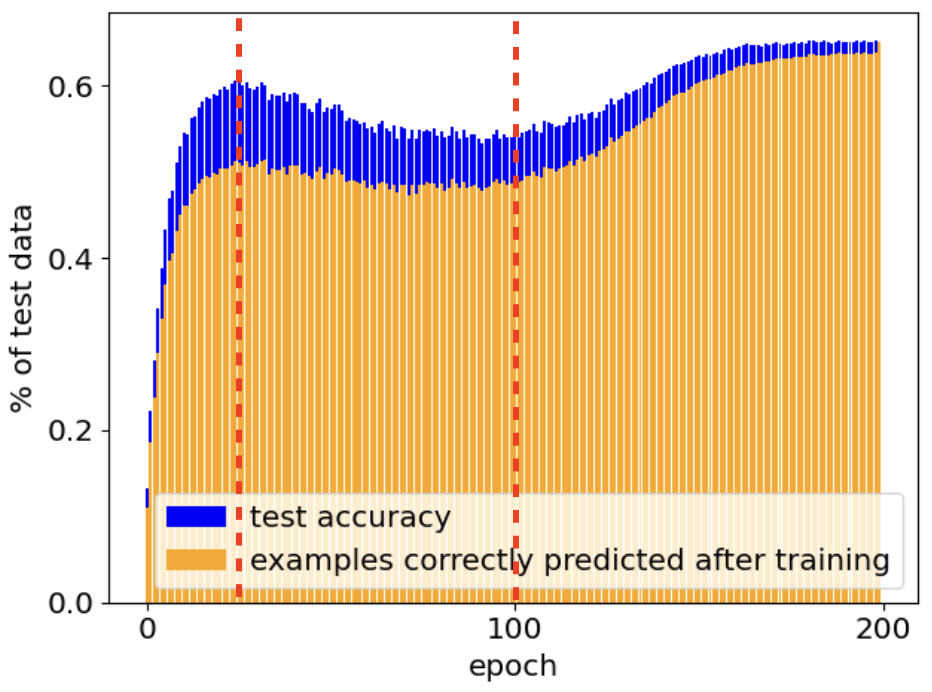}
    % }\hspace{-0.25cm}
    \subfloat[Cifar100, original data\label{subfig:DDc100}]
    {
        \includegraphics[width=0.24\textwidth]{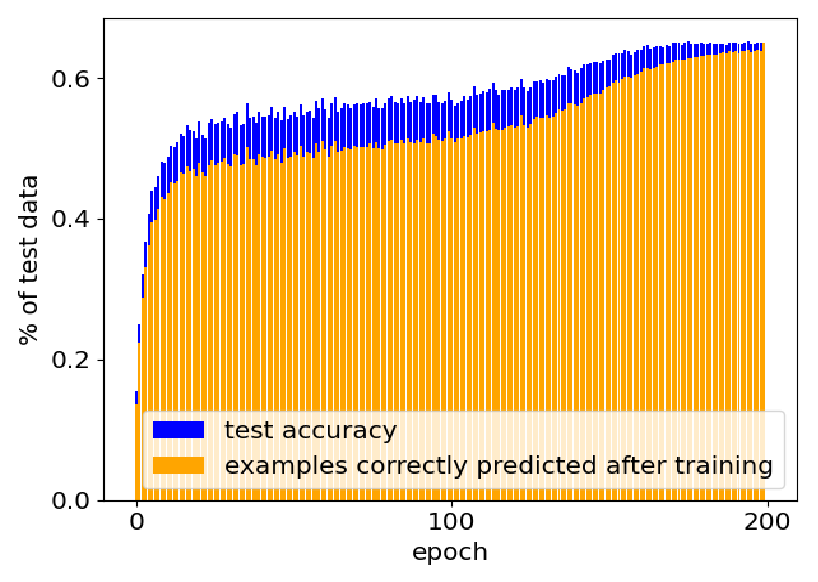}
    }\hspace{-0.25cm}
    % \subfloat[TinyImageNet, 20\% sym noise\label{subfig:DDTimgsym20}]{
    %     \includegraphics[width=0.24\textwidth]{figures/newtestbarTimgsym20r18.png}
    % }\hspace{-0.25cm}
    \subfloat[Cifar100, 20\% sym noise\label{subfig:DDc100sym20}]{
        \includegraphics[width=0.24\textwidth]{figures/newtestbarcifar100sym20r18.png}
    }\hspace{-0.25cm}
    \subfloat[Cifar100, 20\% sym noise\label{subfig:c100sym20testacc}]{
        \includegraphics[width=0.24\textwidth]{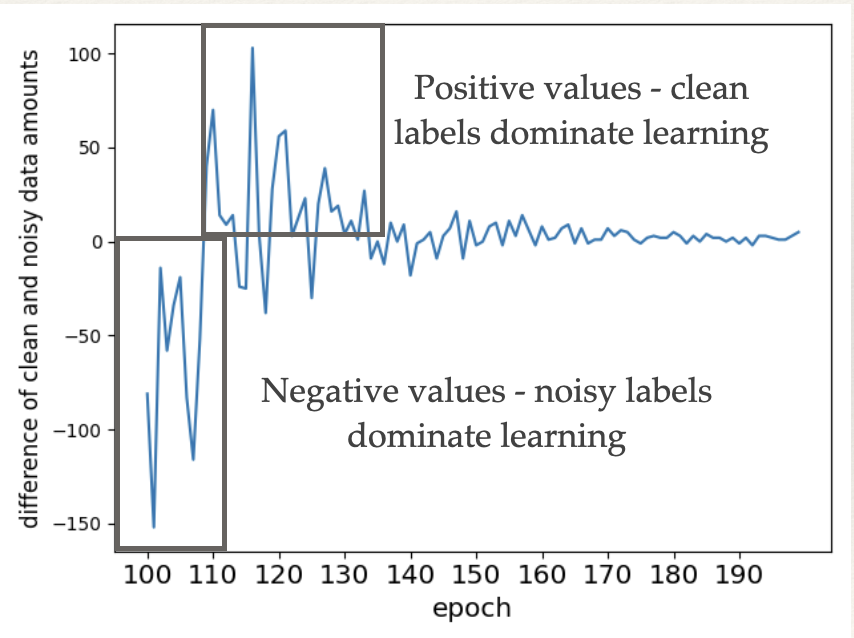}
    }\hspace{-0.25cm}
    \subfloat[TinyImageNet, 20\% sym noise\label{subfig:logitsheatmapc100}]{
        \includegraphics[width=0.24\textwidth]{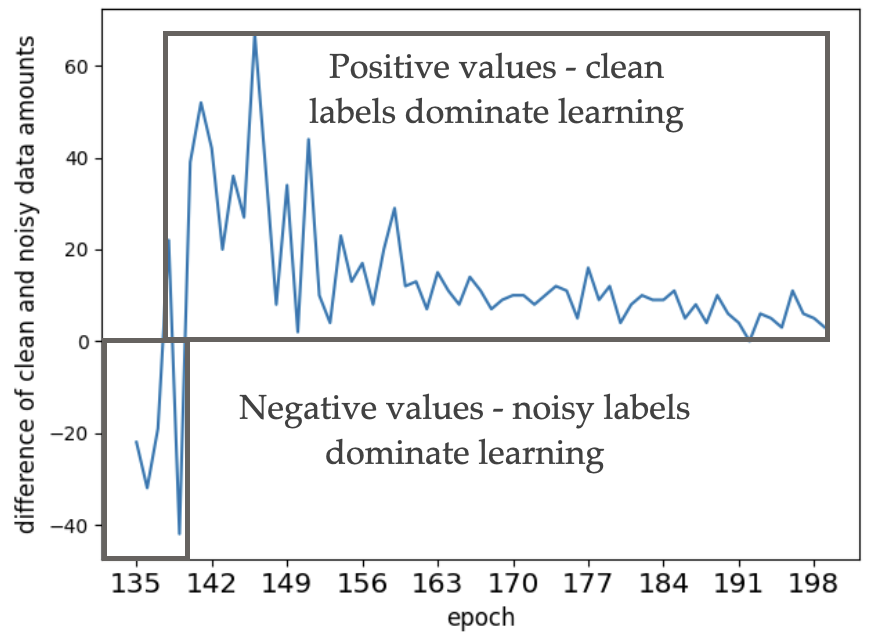}
    }
    % \caption[heatmap]{(a)--(b): Blue denotes test accuracy. Among those correctly recognized in each epoch $e$, orange denotes the fraction that remains correctly recognized at the end. The test accuracy (the blue curve) shows a clear double ascent of accuracy, which is much less pronounced in the orange curve. During the decrease in test accuracy—the range of epochs between the first and second dashed red vertical lines—the large gap between the blue and orange plots indicates the fraction of test data that has been correctly learned in the first ascent and then forgotten, without ever being re-learned in the later recovery period of the second ascent. (c)--(d): The difference between the number of clean and noisy datapoints at each epoch during the second ascent of test accuracy (the epochs after the second dashed red vertical line), counting datapoints with large loss only. Positive (negative) values indicate that clean (noisy) datapoints are more dominant in the corresponding epoch.}
    \caption[heatmap]{(a)--(b): Blue denotes test accuracy. Among those correctly recognized in each epoch $e$, orange denotes the fraction that remains correctly recognized at the end. The test accuracy (the blue curve) in (b) shows a clear double ascent of accuracy, which is much less pronounced in the orange curve. During the decrease in test accuracy—the range of epochs between the first and second dashed red vertical lines—the large gap between the blue and orange plots indicates the fraction of test data that has been correctly learned in the first ascent and then forgotten, without ever being re-learned in the later recovery period of the second ascent. This gap also appears in (a), demonstrating that forgetting occurs even without double ascent. (c)--(d): The difference between the number of clean and noisy datapoints at each epoch during the second ascent of test accuracy (the epochs after the second dashed red vertical line), counting datapoints with large loss only. Positive (negative) values indicate that clean (noisy) datapoints are more dominant in the corresponding epoch.}
    \label{fig:doubledescent}
\end{figure*}

The textbook definition of overfitting entails the co-occurrence of increasing train accuracy and decreasing generalization. Let $acc(e,S)$ denote the accuracy over set $S$ in epoch $e$, where $E$ is the total number of epochs, and $T$ is the test dataset. Throughout this paper, the terms "test set" and "validation set" are used interchangeably. When approximating generalization by test accuracy, overfitting is said to occur at epoch $e$ if $acc(e,T) \geq acc(E,T)$.  

We begin by investigating the hypothesis that portions of the test data $T$ may be forgotten by the network during training. 
When we examine  the 'epoch-wise double descent', which frequently occurs during training on datasets with significant label noise, we indeed observe that a notable forgetting of the test data coincides with the memorization of noisy labels. Here, forgetting serves as an objective indicator of overfitting. When we further examine the training of modern networks on standard datasets (devoid of label noise), where overfitting (as traditionally defined) is absent, we discover a similar phenomenon (though of weaker magnitude): \emph{the networks still appear to forget certain sub-regions of the test population}. This observation, we assert, signifies a significant and more subtle form of overfitting in deep learning.

\subsection*{Local overfitting} Let $M_{e}$ denote the subset of the test data \emph{mislabeled} by the network at some epoch $e$. We define below two scores $L_e$ and $F_e$:
\begin{equation}
\label{eq:forget}
    F_e = \frac{acc(e,M_{E})\cdot|M_{E}|}{|T|}, ~
    L_e = \frac{acc(E,M_{e})\cdot|M_{e}|}{|T|}
\end{equation}
The forget fraction $F_e$ represents the fraction of test points correctly classified at epoch $e$ but misclassified by the final model. $L_e$ represents the fraction of test points misclassified at epoch $e$ but correctly classified by the final model. The relationship $acc(E, T) = acc(e, T)+L_e-F_e$ follows\footnote{$acc(E,T) - L_{e} = acc(e,T) - F_{e}$ is the fraction of test points correctly classified in both $e$ and $E$.}. In line with the classical definition of overfitting, if $L_{e} < F_{e}$, overfitting occurs since $acc(E, T) < acc(e, T)$.

But what if $L_e \geq F_e~\forall e$? By its classical definition, overfitting does not occur since the test accuracy increases continuously. Nevertheless, there may still be local overfitting as defined above, since $F_e>0$ indicates that data has been forgotten even if $L_e \geq F_e$. 

\subsection*{Reflections on the epoch-wise double descent} Epoch-wise double descent (see Fig.~\ref{fig:doubledescent}) is an empirical observation \citep{belkin2019reconciling}, which shows that neural networks can improve their performance even after overfitting, thus causing double descent in test error during training (note that we show the corresponding \emph{double-ascent in test accuracy}). This phenomenon is characteristic of learning from data with label noise, and is strongly related to overfitting since the dip in test accuracy co-occurs with the memorization of noisy labels. 

We examine the behavior of score $F_e$ in this context and make a novel observation: when we focus on the fraction of data correctly classified by the network during the second rise in test accuracy, we observe that the data newly memorized during these epochs often differs from the data forgotten during the overfitting phase (the dip in accuracy). In fact, most of this data has been previously misclassified (see Fig.~\ref{subfig:DDc100sym20}). Figs.~\ref{subfig:c100sym20testacc}-\ref{subfig:logitsheatmapc100} further illustrate that during the later stages of training on data with label noise, the majority of the data being memorized is, in fact, data with clean labels, which explains the second increase in test accuracy. It thus appears that epoch-wise double descent is caused by the simultaneous learning of general (but hard to learn) patterns from clean data, and irrelevant features of noisy data.% that causes overfitting.

\subsection*{Forgetting in the absence of label noise} When training deep networks on visual benchmark datasets without added label noise, overfitting rarely occurs. In contrast, we observe that local overfitting, as captured by our new score $F_e$, commonly occurs, as demonstrated in Fig.~\ref{subfig:DDc100}.

\begin{figure}[htb]
    \centering
    \captionsetup[subfigure]{font=footnotesize} 
    
    \subfloat[\resizebox{.25\linewidth}{!}{ImageNet}\label{subfig:modelsizeforget}]{
        \includegraphics[width=0.55\linewidth]{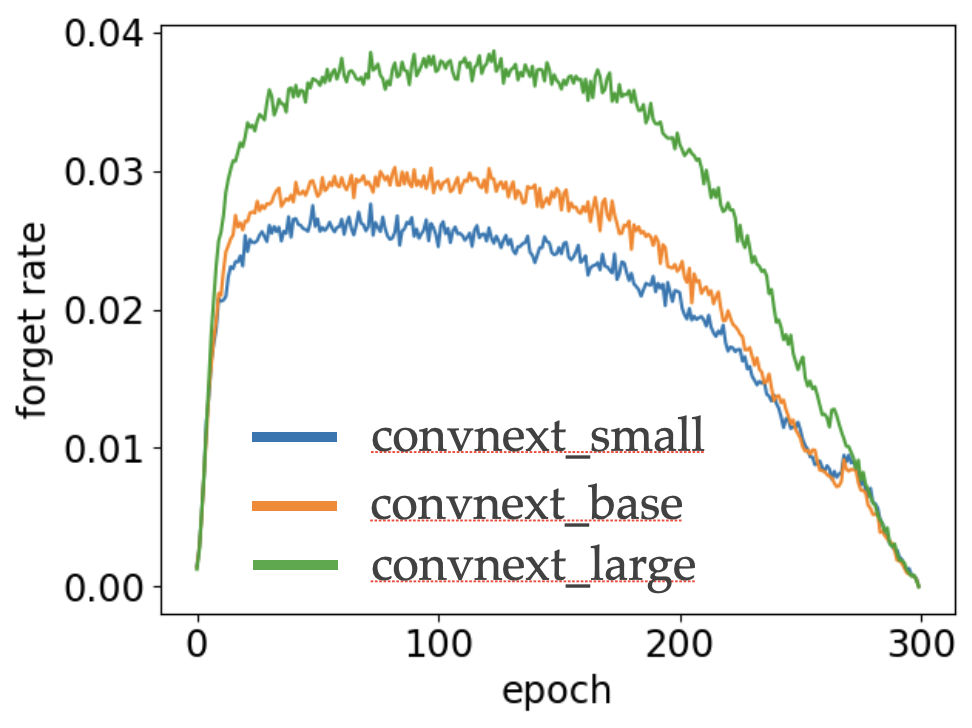}
        \vspace{-0.5cm}
    }
    % \subfloat[\resizebox{.8\linewidth}{!}{TinyImageNet, ResNet18}\label{subfig:Timgsumcorrect}]{
    %     \includegraphics[width=0.3\linewidth]{figures/Timgsumcorrect.png}
    %     \vspace{-0.5cm}
    % }
    \subfloat[\resizebox{.6\linewidth}{!}{TinyImageNet, ResNet18}\label{subfig:forgettimetimg}]{
        \includegraphics[width=0.42\linewidth]{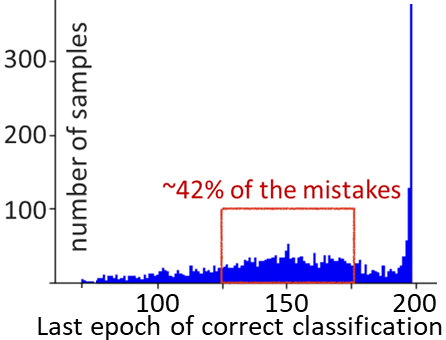}
       \vspace{0.5cm}
    }
    \caption[forget]{(a) The $F_e$ score (\ref{eq:forget}) of ConvNeXt trained on ImageNet, 3 network sizes: small $\rightarrow$ blue, base $\rightarrow$ orange, and large $\rightarrow$ green. Accuracy remained consistent across all network sizes, while it is evident that $F_e$ increases with the network size. (b)Within the set of wrongly classified test points after training, we show the last epoch in which an example was classified correctly.}
    \label{fig:forget}
\end{figure}

To investigate this phenomenon, we trained various neural networks (ConvNets: Resnet, ConvNeXt; Visual transformers: ViT, MaxViT) on various datasets (CIFAR-100, TinyImageNet, ImageNet) using a variety of optimizers (SGD, AdamW) and learning rate schedulers (cosine annealing, steplr). Results on ImageNet are shown in Fig.~\ref{subfig:modelsizeforget}; the results of additional experiments on other datasets and configurations can be found in Fig.~\ref{subfig:trainsizeforget} of \app\ref{app:moreforget} in the Supplementary material. 

Across all settings, we observe that networks consistently forget a subset of the data during training, in a manner similar to what is observed with label noise, even when test accuracy continues to improve. Fig.~\ref{subfig:forgettimetimg} (see additional results and analysis in Figs.~\ref{subfig:cifar100sumcorrect}-\ref{subfig:forgettimecifar100}) demonstrate that this effect is not simply due to random fluctuations: many test examples that are incorrectly classified post training have been correctly classified during much of the training. These results are connected to overfitting in Fig.~\ref{subfig:modelsizeforget}: when investigating larger models and/or relatively small amounts of train data, which are scenarios that are expected to increase overfitting based on theoretical considerations, the \emph{forget fraction} $F_e$ is larger. 

\Myparagraph{Summary} We see that neural networks can, and often will, "forget" significant portions of the test population as their training proceeds. In a sense, the networks \emph{are} overfitting, but this only occurs at some limited sub-regions of the feature space. The reason this failing is not captured by the classical definition of overfitting is that the networks continue to learn new general patterns simultaneously. In Section~\ref{sec:method} we discuss \emph{how we can harness this observation to improve the network's performance}.

\section{Knowledge Fusion (KF)}
\label{sec:method}

In Section~\ref{sec:overfitanddoubledescent} we showed that neural networks often achieve better performance in mid-training on a subset of the test data, even when the test accuracy is monotonically increasing with training epochs. In this section, we introduce an approach to integrate the knowledge obtained in both mid- and post-training epochs during inference time, to improve performance. To this end we must determine: \begin{inparaenum}[(i)] \item  which versions of the model to use; \item how to combine them with the post-training model; and \item how to weigh each model in the final ensemble. \end{inparaenum}  

We call our method \textbf{KnowledgeFusion} (KF). Below, we outline the core components of the approach and provide further implementation details in the following subsections.

\paragraph{Choosing an early epoch of the network} Given a set of epochs $\{1,\ldots,E\}$ and corresponding forget rates $\{F_e\}_e$, we first single out the model $n_A$ obtained at epoch $A = \argmax_{e \in \{1,...,E\}}F_e$. This epoch is most likely to correctly fix mistakes of the model on "forgotten" test data.

\paragraph{Combining the predictors} Next, using validation data we determine the relative weights of the two models—the final model $n_E$, and the intermediate model $n_A$ with maximal forget fraction. Since the accuracy of $n_E$ is typically much higher than $n_A$, and in order not to harm the ensemble's performance, we expect to assign $n_E$ a higher weight. 

\paragraph{Improving robustness} To improve our method's robustness to the choice of epoch $A$, we use a window of epochs around $A$, denoted by $\{n_{A-w},...,n_A,..., n_{A+w}\}$. The vectors of probabilities computed by each checkpoint are averaged before forming an ensemble with $n_E$. In our experiments we use a fixed window $w=1$, achieving close to optimal results, as verified in the ablation study Section~\ref{sec:empirical}.

\paragraph{Iterative selection of models} As we now have a new predictor, we can find another alternative predictor from the training history that maximizes accuracy on the data misclassified by the new predictor, in order to combine their knowledge as described. This can be repeated iteratively, until no further improvement is achieved. 

\paragraph{Choosing hyper-parameters} In order to compute $F_e$ and assign optimal model weights and window size, we use a validation set, which is a part of the labeled data not shown to the model during initial training. This is done \textbf{post training} as it has no influence over the training process, and thus \emph{doesn't incur additional training cost}. Following common practice, we show in the ablation study below that after optimizing these hyper-parameters, it is possible to retrain the model on the complete training set while maintaining the same hyper-parameters. The performance of KF thus trained is superior to alternative methods trained on the same data. 

\subsection{Hyper-parameter Calculation}

Algorithm~\ref{alg:val} outlines the hyper-parameter search phase of our method, Knowledge Fusion, which is performed once after training. It is responsible for selecting the most relevant epochs and their associated weights.

\begin{algorithm}[htb]
   \caption{KF - hyper-parameter calculation}
   \begin{algorithmic}
      \State {\bfseries Input:} all past checkpoints during training of the neural network \{ $n_0$,...,$n_E$ \}, $w$, and validation data $V$
      \State {\bfseries Output:} list of alternative epochs and their weights
      \State $class\_probs \gets$ \textbf{get\_class\_probs}(\{ $n_0$,...,$n_E$ \}, $V$)
      \State $prob \gets class\_probs[E]$
      \State explore = \{ $n_0$,...,$n_E$ \}
      \State Alternative\_epochs = \{\}
      \State epsilons = \{\}
      \While{explore is not empty}
         \State $F \gets$ \textbf{calc\_forget\_per\_epoch}($prob$, $class\_probs$)
         \State $alt\_epoch \gets$ \textbf{argmax}($F[explore]$)
         \State Alternative\_epochs.\textbf{append(alt\_epoch)}
         \State explore.\textbf{remove(alt\_epoch - 1, alt\_epoch, alt\_epoch + 1)}
         \For{$\varepsilon \in \{0, 0.01, ..., 1\}$}
            \State $prob_A \gets \mathbf{mean}(class\_probs[A_i-w:A_i+w])$
            \State $combined\_prob \gets \varepsilon \cdot prob_A + (1-\varepsilon) \cdot prob$
            \If{\textbf{validation\_acc}(combined\_prob) $\geq$ \textbf{validation\_acc}(prob)}
               \State $best\_prob \gets combined\_prob$
               \State $best\_epsilon \gets combined\_prob$
            \EndIf
         \EndFor
         \State $prob \gets best\_prob$
         \State epsilons.\textbf{append(argmax(best\_epsilon))}
      \EndWhile
      \State \textbf{Return} Alternative\_epochs, epsilons
   \end{algorithmic}
   \label{alg:val}
\end{algorithm}

\subsection{Prediction}

Algorithm~\ref{alg:test} shows the prediction step of Knowledge Fusion (KF). Using the epochs and weights computed in Algorithm~\ref{alg:val}, it combines the predictions of selected checkpoints to produce the final output for a test sample. As in Algorithm~\ref{alg:val}, the \textbf{get\_class\_probs} function is utilized to obtain the class probabilities for a given example and a list of predictors.

\begin{algorithm}[htbp]
\small
   \caption{Knowledge Fusion (KF) Prediction}
   \begin{algorithmic}
      \State {\bfseries Input:} Checkpoints of trained model \{ $n_0$,...,$n_E$ \}, selected epochs \{ $A_1$,...,$A_k$ \}, associated weights \{ $\varepsilon_1$,...,$\varepsilon_k$ \}, $w$, test-pt $x$
      \State {\bfseries Output:} prediction for $x$
      \State $prob \gets \textbf{get\_class\_probs}[E]$
      \For{$i \gets 1$ \textbf{to} $k$}
         \State $prob_A \gets \textbf{mean}(\textbf{get\_class\_probs}[A_i-w:A_i+w])$
         \State $prob \gets \varepsilon_i \cdot prob_A + (1-\varepsilon_i) \cdot prob$
      \EndFor
      \State $prediction \gets \textbf{argmax}(prob)$
      \State \textbf{Return} $prediction$
   \end{algorithmic}
   \label{alg:test}
\end{algorithm}

\subsection{Discussion}
\label{subsec:discussion}

Empirical results (see Section~\ref{sec:empirical}) show that KF consistently performs on par with, or better than, strong baselines. This holds across a range of architectures, datasets, and training conditions.

However, a key limitation of our method lies in its computational cost at inference. Since predictions involve aggregating outputs from multiple model checkpoints, runtime and memory usage scale linearly with the number of component in the ensemble. This cost can be prohibitive in resource-constrained deployment settings, making the method less practical despite its accuracy benefits.

To address this, the next section introduces strategies for condensing the KF ensemble into a single, compact model. These strategies aim to preserve the performance advantages of the ensemble while enabling efficient inference, thereby broadening the practical applicability of KF.

\section{Condensing Knowledge Fusion}
\label{sec:KnowledgeDistillation}

The KF method is designed to enhance model accuracy while successfully mitigating local overfitting, as demonstrated in Section~\ref{sec:empirical}. It can be applied as a post-processing step to any trained model with only a minor increase in training costs. However, because KF produces an ensemble classifier, its adoption results in a substantial increase in inference time.

To address this issue, we explore an additional post-processing step that compresses the ensemble classifier into a single model, thereby reducing inference costs to a level comparable with the baseline. Below, we discuss two such methods that entail a trade-off between implementation complexity and model accuracy, as demonstrated in Section~\ref{sec:empirical}.

\subsection{Method 1: Distilled KF}

Our first approach leverages Knowledge Distillation (KD). This technique involves training a smaller, more efficient model (the student) to replicate the behavior of a larger, more complex model (the teacher). In the context of Knowledge Fusion, the teacher model is represented by the KF ensemble, while the student is a single model distilled from this ensemble. The goal is to combat local overfitting by leveraging the combined knowledge that KF has accumulated from earlier stages of training, while fully eliminating the memory overhead and increased inference time associated with maintaining a complete ensemble.

In our experiments presented later in this section, the distilled network was designed to match the size of the original single network. This ensures a fair comparison in terms of resource usage during inference and similar training times. We conducted multiple experiments across various settings, including different noise levels and datasets, all of which demonstrated the effectiveness of this approach. Our results indicate that applying Knowledge Distillation to Knowledge Fusion not only enhances computational efficiency but also preserves the high accuracy of the ensemble, confirming its practicality for large-scale tasks.

\myparagraph{Brief Overview of Knowledge Distillation}
We now describe the standard implementation of Knowledge Distillation (KD), introduced earlier in Section~\ref{sec:related-work-distillation}. KD typically relies on two loss terms: a soft target loss that compares teacher and student predictions, and a true label loss computed using ground-truth labels.

The soft target loss is computed by comparing the logits of the teacher model with the logits of the student model. Both sets of logits are scaled by a temperature parameter \( T \), which smooths the logits and generates softened probability distributions. This allows the student to learn from the teacher's more informative, probabilistic outputs.

Neural networks typically produce class probabilities by applying the \textit{softmax} function to the output logits. Given a vector of logits \( z \in \mathbb{R}^{C} \), the temperature-scaled softmax function converts each logit \( z_c \) into a probability by comparing it with the other logits:
\[
\text{softmax}\left(\frac{z}{T}\right)_c = \frac{e^{z_c / T}}{\sum_{j=1}^{C} e^{z_j / T}}.
\]
This results in a smooth probability distribution over the \( C \) classes, with higher temperature \( T \) producing a more uniform distribution.

Given teacher logits \( \mathbf{z}_\text{T} \) and student logits \( \mathbf{z}_\text{S} \), we define their soft probability vectors as
\[
\hat{\mathbf{p}}_\text{T} = \text{softmax}\left( \frac{\mathbf{z}_\text{T}}{T} \right), \quad
\hat{\mathbf{p}}_\text{S} = \text{softmax}\left( \frac{\mathbf{z}_\text{S}}{T} \right).
\]
The soft target loss is then defined as the Kullback-Leibler (KL) divergence between the teacher and student distributions:
\[
\mathcal{L}_\text{soft} = T^2 \cdot \text{KL} \left( \hat{\mathbf{p}}_\text{T} \,\|\, \hat{\mathbf{p}}_\text{S} \right).
\]
The factor of \( T^2 \) ensures that the gradient magnitudes remain comparable across different values of \( T \), as suggested by \citep{hinton2015distillingknowledgeneuralnetwork}.

% Next, the \textbf{target loss} is computed by comparing the teacher’s and student’s normalized logits. The difference between the log-softmax of the student's logits and the log-softmax of the teacher's logits is scaled by the teacher's softmax probabilities. Specifically, for a single sample, the soft target loss is given by the weighted sum across classes:
% \[
% \mathcal{L}_\text{soft} = \sum_{c=1}^{C} \hat{\mathbf{p}}_\text{T}(c) \cdot \left( \hat{\mathbf{q}}_\text{T}(c) - \hat{\mathbf{q}}_\text{S}(c) \right) \cdot T^2
% \]
% where \( C \) is the number of classes, \( \hat{\mathbf{p}}_\text{T}(c) \) is the teacher’s softmax probability for class \( c \), and \( \hat{\mathbf{q}}_\text{T}(c) \) and \( \hat{\mathbf{q}}_\text{S}(c) \) are the log-softmax probabilities for the teacher and student for class \( c \), respectively. The teacher's softmax probability \( \hat{\mathbf{p}}_\text{T}(c) \) is used to weight the difference between the log-softmax outputs for each class.

Beyond the soft target loss, there is also the student's original loss, denoted as \( \mathcal{L}_{\text{label}} \), which represents the \textbf{true label loss}. In our case, it is the cross-entropy loss computed using the student model’s logits and the true labels \( \mathbf{y} \):
\[
\mathcal{L}_{\text{label}} = \text{cross\_entropy}(\mathbf{z}_{\text{S}}, \mathbf{y})
\]

The total loss is a weighted sum of the soft target loss and the true label loss with hyperparameter $\alpha\in[0,1]$:
\begin{equation}
\mathcal{L}_{\text{total}} = \alpha \cdot \mathcal{L}_{\text{soft}} + (1 - \alpha) \cdot \mathcal{L}_{\text{label}}
\end{equation}

This formulation allows the student model to retain knowledge from the teacher model (via soft targets), while ensuring that it also performs well on the ground-truth labels.

\subsection{Method 2: Averaged KF}
\label{subsec:SimpWeightAveraging}

As an alternative to applying Knowledge Distillation to our Knowledge Fusion method, and as a baseline to assess the necessity of distillation, we explored a simpler approach to reducing the complexity of the KF ensemble. This method directly combines the weights of the ensemble models into a single model. 

The procedure mirrors the Knowledge Fusion (KF) algorithm, where we iteratively select model checkpoints with high forgetting scores and incorporate them into the current model. However, rather than creating an ensemble of models, we average the network weights of the selected models to produce a single set of weights. This method allows for predictions with only a single forward pass, thus reducing the computational and memory costs compared to an ensemble.

As in the ensemble approach, each model's contribution to the final mechanism is weighted according to its performance. The optimal weight for each model is determined by testing a range of potential values on the validation set and selecting the one that yields the best performance.

\section{Empirical Evaluation}
\label{sec:empirical}

In this section, we evaluate the performance of our method, with and without phase 2, against the original predictor (i.e., the early-stopped trained network) and other baselines. Full implementation details are provided in the supplementary material, \app\ref{app:implementationdetails}. 

\subsection{Baselines} 

%\myparagraph{Baselines:} 
KF incurs the training cost of a single model, and thus, following the methodology of \citep{huang2017snapshot}, we begin by comparing our method to methods that require the same amount of training time. The first group of baselines includes methods that do not alter the training process:
\begin{itemize}
    \item \textbf{Single network}: The original network, after training, which in this case refers to the early-stopped version of the model, where the validation set is used to choose the best stopping point. 
    \item \textbf{Horizontal ensemble} \citep{xie2013horizontal}: This method uses a set of epochs at the end of the training, and delivers their average probability outputs (with the same number of checkpoints as we do). 
    \item \textbf{Fixed jumps}: This baseline was used in \citep{huang2017snapshot}, where several checkpoints of the network, equally spaced through time, are taken as an ensemble.
\end{itemize} 

\begin{table}[b!]
\caption{Comparison on Clean Datasets}
\label{table:regularnetworksKD}
\begin{center}
\footnotesize
\begin{tabular}{l|c||c}
\multicolumn{1}{l|}{Method/\textbf{Dataset}} & \textbf{CIFAR-100} & \textbf{TinyImageNet} \\
\multicolumn{1}{r|}{architecture} & Resnet18 & Resnet18 \\
\toprule
\emph{baseline network} & 
$78.85 \pm .32$ & $65.30 \pm .36$ \\
\hline
\noalign{\vskip 0.5ex}
\multicolumn{3}{l}{\textbf{Ensembles}} \\
\emph{horizontal} & 
$78.42 \pm .17$ & $65.61 \pm .31$ \\
\emph{fixed jumps} & 
$78.21 \pm .46$ & $67.41 \pm .21$ \\
\emph{KF} & 
$79.36 \pm .17$ & $68.33 \pm .17$ \\
\hline
\noalign{\vskip 0.5ex}
\multicolumn{3}{l}{\textbf{Condensed Models}} \\
\emph{Averaged KF} & 
$78.48 \pm .26$ & $65.17 \pm .23$ \\
\emph{Distilled KF} & 
$\mathbf{80.29 \pm .16}$ & $\mathbf{69.96 \pm .15}$ \\

\bottomrule
\emph{improvement} & 
$1.44 \pm .36$ & $4.66 \pm .39$ \\
\bottomrule

\end{tabular}
\end{center}
%\raggedright

\footnotesize{\vspace{0.25cm}Mean (over random validation/test splits) test accuracy (in percent) and standard error on image classification datasets, comparing our method and
baselines described in the text. The last row shows the improvement of the best performer over the baseline network. Bolded values indicate the best performer in each column.}
\vspace{-0.25cm}
\end{table}

The second group includes methods that \emph{alter} the training protocol. While this is not a directly comparable set of methods, as they focus on a complementary way to improve performance, we report their results in order to further validate the usefulness of our method. This group includes:

\begin{itemize}
\item \textbf{Snapshot ensemble} \citep{huang2017snapshot}: The network is trained in multiple "cycles," each ending with a sharp learning rate increase to escape local minima and promote convergence to diverse solutions for ensembling.
\item \textbf{Stochastic Weight Averaging} (SWA) \citep{izmailov2018averaging}: The network is first trained conventionally, then continuing with a constant or cyclic learning rate to reach multiple local minima, whose weights are averaged. For fair comparison, we adopt their setup: using our strategy for 75\% of the epochs, followed by SWA for the remaining 25\%.
\item \textbf{Fast Geometric Ensembling} (FGE) \citep{garipov2018loss}: Like SWA, where this method averages the output probabilities of diverse models rather than their weights. Computational budgets are matched as described above.

\end{itemize}

%Full implementation details are provided in the Appendix. %Following are some implementation notes regarding the different scenarios

\begin{table*}[thb!]
\caption{Comparison on Noisy Datasets}
\label{table:noisyKD}
\begin{center}
\scriptsize
\begin{tabular}{l|c||c|c|c|c||c|c}
\multicolumn{1}{l|}{Method/\textbf{Dataset}} 
  & \multicolumn{1}{c||}{\textbf{CIFAR-100N}} 
  & \multicolumn{4}{c||}{\textbf{CIFAR-100}} 
  & \multicolumn{2}{c}{\textbf{TinyImageNet}} \\
\multicolumn{1}{r|}{architecture / \% noise} 
  & 40\,\% 
  & 20\,\% asym & 40\,\% asym & 20\,\% sym & 40\,\% sym 
  & 20\,\% sym & 40\,\% sym \\
\toprule
\emph{baseline network} 
  & $54.53 \pm .44$ 
  & $67.07 \pm .42$ & $49.95 \pm .53$ & $65.16 \pm .25$ & $59.03 \pm .68$ 
  & $56.25 \pm .47$ & $49.83 \pm .3$ \\
\hline
\noalign{\vskip 0.5ex}
\multicolumn{3}{l}{\textbf{Ensembles}} \\
\emph{fixed jumps} 
  & $60.38 \pm .57$ 
  & $71.46 \pm 1.91$ & $59.9 \pm .6$ & $72.8 \pm .1$ & $66.5 \pm .1$ 
  & $60 \pm .8$ & $54.16 \pm .3$ \\
\emph{horizontal}  
  & $55.14 \pm .57$ 
  & $73.04 \pm .47$ & $58.5 \pm .1$ & $71.1 \pm .38$ & $65.2 \pm .1$ 
  & $57.74 \pm .47$ & $51.7 \pm .2$ \\
\emph{KF} 
  & $63 \pm .39$ 
  & $74.58 \pm .28$ & $63.37 \pm .53$ & $72.68 \pm .15$ & $67.43 \pm .12$ 
  & $\mathbf{62.22 \pm .38}$ & $55.66 \pm .37$ \\
\hline
\noalign{\vskip 0.5ex}
\multicolumn{3}{l}{\textbf{Condensed Models}} \\
\emph{Averaged KF} 
  & $62.67 \pm .48$ 
  & $70.72 \pm .44$ & $56.4 \pm .8$ & $69.08 \pm .53$ & $64.16 \pm .37$ 
  & $57.08 \pm .43$ & $50.39 \pm .38$ \\
\emph{Distilled KF} 
  & $\mathbf{64.12 \pm .31}$ 
  & $\mathbf{74.96 \pm .33}$ & $\mathbf{63.4 \pm .47}$ & $\mathbf{72.86 \pm .26}$ & $\mathbf{68.42 \pm .23}$ 
  & $62.04 \pm .47$ & $\mathbf{56.28 \pm .24}$ \\
\bottomrule
\emph{improvement} 
  & $9.59 \pm .71$ 
  & $7.51 \pm .5$ & $13.55 \pm .73$ & $7.7 \pm .27$ & $9.39 \pm .65$ 
  & $5.79 \pm .66$ & $6.44 \pm .37$ \\
\bottomrule
\end{tabular}
\end{center}
%\raggedright

\footnotesize{\vspace{0.25cm}Mean test accuracy (in percent) and standard error of ResNet18, comparing our method and the baselines on datasets with large label noise and significant overfitting. We include a comparison using the CIFAR-100N dataset, which has innate label noise. The last row shows the improvement of the best performer over the baseline network. Bolded values indicate the best performer in each column.}
\vspace{-0.25cm}
\end{table*}

\begin{table*}[thb!]
  \footnotesize
  \caption{Comparison With Independent 
  Ensembles}
  \label{table:IndependentEnsembleKD}
  
\begin{center}
  \begin{tabular}{l||c|c|c||c|c|c}
    \multicolumn{1}{l||}{\textbf{Method/Dataset}} 
      & \multicolumn{3}{c||}{\textbf{CIFAR-100}} 
      & \multicolumn{3}{c}{\textbf{TinyImageNet}} \\
    
    \multicolumn{1}{r||}{architecture / \% noise} 
      & 0\% & 20\% asym & 40\% asym  
      & 0\% & 20\% sym & 40\% sym \\
      
    \toprule

    \emph{single network} 
      & $78.85 \pm .32$ & $67.07 \pm .42$ & $49.95 \pm .53$ 
      & $65.30 \pm .36$ & $56.25 \pm .47$ & $49.83 \pm .3$ \\

    \hline

    \emph{independent ensemble} 
      & $\mathbf{82.13}$ & $73.89$ & $55.18$ 
      & $\mathbf{71.88}$ & $\mathbf{64.22}$ & $\mathbf{57.56}$ \\

    \emph{distilled independent ensemble} 
      & $79.93 \pm .21$ & $\mathbf{74.54 \pm .18}$ & $\mathbf{56.84 \pm .22}$
      & $67.58 \pm .22$ & $62.54 \pm .25$ & $57.24 \pm .33$ \\
    
    \hline

    \emph{KF ensemble (ours)} 
      & $79.36 \pm .17$ & $74.58 \pm .28$ & $63.37 \pm .53$ 
      & $68.33 \pm .17$ & $\mathbf{62.22 \pm .38}$ & $55.66 \pm .37$ \\

    \emph{distilled KF ensemble (ours)} 
      & $\mathbf{80.29 \pm .16}$ & $\mathbf{74.96 \pm .33}$ & $\mathbf{63.4 \pm .47}$ 
      & $\mathbf{69.96 \pm .15}$ & $62.04 \pm .47$ & $\mathbf{56.28 \pm .24}$ \\

    \emph{averaged KF ensemble (ours)} 
      & $78.48 \pm .26$ & $70.72 \pm .44$ & $56.79 \pm .92$ 
      & $65.17 \pm .23$ & $57.08 \pm .43$ & $50.32 \pm .45$ \\

    \bottomrule
  \end{tabular}
%  \raggedright
  
\footnotesize{\vspace{0.25cm}A comparison of the Knowledge Fusion ensemble and its two condensed variants with an independent ensemble and its distilled version.}
\end{center}  
 \vspace{-0.25cm}
 
\end{table*}

%A comparison to test time augmentation, which relates to resisting overfitting, is discussed in section~\ref{subsec:ablation}. 

\subsection{Results}

% \daphna{\sout{no need to do this - start with all the results, and then the discussion will naturally divide...}}

\myparagraph{Main Results} Table~\ref{table:regularnetworksKD} reports the performance obtained by all variants of our method on CIFAR-100 and TinyImageNet. For comparison, we report the results of both the original model and the aforementioned baselines. Additional results for scenarios associated with overfitting are shown in Table~\ref{table:noisyKD}, where we test our method on these datasets with injected symmetric and asymmetric label noise (see \app\ref{app:implementationdetails}), and CIFAR-100N - a real world dataset with label noise. As is customary, label noise is present only in the training data, while the test data remains clean for evaluation.

The results in Tables~\ref{table:regularnetworksKD} and~\ref{table:noisyKD} reveal a rather surprising finding: the distilled KF model not only matches but often \textbf{outperforms} the KF ensemble of phase 1, on both clean and noisy data. While this may seem counterintuitive—since distillation typically aims to mimic a stronger teacher—it aligns with prior findings that student models can occasionally surpass their teachers in accuracy and robustness~\citep{allen2020towards, pham2022revisiting, das2023understanding}. In our case, this is likely due to the synergy between two known effects: the strength of checkpoint-based ensembles \citep[see][]{wang2022learn, wang2022efficient}, and the regularizing benefits of distillation.

\begin{table*}[!htbp]
\renewcommand{\arraystretch}{1.3}
\caption{Methods Altering the Training Procedure}
\label{table:specialmethods} % Mine
\begin{center}
\footnotesize
\begin{tabular}{l| c || c|c || c|c|c}
\multicolumn{1}{ c |}{Method/\textbf{Dataset}} & \multicolumn{1}{ c ||}{\textbf{CIFAR-100}}& 
\multicolumn{2}{ c ||}{\textbf{CIFAR-100 asym}} & \multicolumn{3}{ c  }{\textbf{CIFAR-100 sym}} \\ 
\multicolumn{1}{ r |}{\% label noise} &     0\% & 20\% & 40\%  &  20\% & 40\%  & 60\% \\
\toprule
\emph{FGE}   & $78.9 \pm .4$& $67.1 \pm .2$& $48.1 \pm .3$& $66.5 \pm .1$& $52.1 \pm .1$ & $38.3 \pm .7$ \\
\emph{SWA}   & $78.8 \pm .1$& $66.6 \pm .1$& $46.9 \pm .2$& $65.6 \pm .4$& $50.0 \pm .1$ & $30.5 \pm .7$\\
\emph{snapshot}   & $78.4 \pm .1$ & $72.1 \pm .4$ & $52.8 \pm .6$& $70.8 \pm .5$ & $63.8 \pm .2$  &  $55.6 \pm .2$ \\
\emph{KF} & $\mathbf{79.36 \pm .17}$& $\mathbf{74.58 \pm .28}$& $\mathbf{63.37 \pm .53}$& $\mathbf{72.68 \pm .15}$ & $\mathbf{67.43 \pm .12}$ & $\mathbf{57.6 \pm .2}$\\
\bottomrule
\end{tabular}

%\raggedright
{\footnotesize \vspace{0.5em} Mean test accuracy of Resnet18 on CIFAR100 with and without label noise, comparing KF with baseline methods that alter the training procedure.}
\end{center}
\vspace{-0.25cm}
\end{table*}

\myparagraph{Independent Ensembles}
Since KF yields an ensemble classifier at inference time, we conclude its evaluation by comparing it to a standard independent ensemble and its distilled variant, see Table~\ref{table:IndependentEnsembleKD}. 
We observe that a standard ensemble, which involves training multiple independent networks, often outperforms KF. However, this comes at a substantial increase in training costs, which may render the approach impractical in some scenarios. 

After applying distillation to condense the respective ensembles, this performance gap disappears—a single condensed KF model outperforms the condensed standard ensemble. Surprisingly, in the presence of label noise, the condensed KF can even surpass the original, independent ensemble classifier, achieving the best performance with significantly reduced training and inference complexity. 

After distillation, a consistent pattern emerges from Table~\ref{table:IndependentEnsembleKD}: in most cases, distilling the independent ensemble leads to a drop in performance, whereas the KF-based ensemble improves. This not only narrows the performance gap between the two approaches, but often allows the distilled KF ensemble to outperform its independent counterpart. This pattern is consistent with prior findings on the distillation of dependent versus independent ensembles, see Section~\ref{sec:related-work-distillation}.

\myparagraph{Additional Comparisons}
In Tables~\ref{table:specialmethods} and~\ref{table:additional results} we compare our method to additional methods that adjust the training protocol itself, using both clean and noisy datasets. We employ these methods using the same network architecture as our own, after suitable hyper-parameter tuning.

\begin{table}[htbp]
\footnotesize
%\scriptsize
\caption{Methods Altering the Training Procedure}
\label{table:additional results}
    
\begin{center}
  \begin{tabular}{l| c||c|c}
%    \toprule
    \multicolumn{1}{ c |}{Method/\textbf{Dataset}} & \multicolumn{1}{ c ||}{\textbf{Animal10N}} & \multicolumn{2}{ c }{\textbf{TinyImageNet}} 
    \\ 
    \multicolumn{1}{ r |}{\% label noise} &     8\% & 20\% & 40\% \\
    \toprule
        \emph{FGE}   & $86.5\pm0.6$ & $53.8 \pm .1$&$40.4 \pm .3$ \\
        \emph{SWA}   & $\mathbf{88.1\pm.2}$ & $52.5 \pm .2$&$39.4 \pm .3$ \\
        \emph{snapshot}   & $86.8\pm.3$ & $62.6 \pm .1$ & $56.5 \pm .3$ \\
        \emph{KF}    & $\mathbf{87.8\pm.4}$ & $\mathbf{62.22 \pm .38}$ &$\mathbf{55.6 \pm .39}$\\
%    \bottomrule

  \end{tabular}

%\raggedright

\footnotesize{\vspace{0.25cm} Like Fig.~\ref{table:specialmethods}, for additional datasets.}
\end{center} 
  
\vspace{-0.25cm}
\end{table}

\subsection{Ablation Study}
\label{subsec:ablation}

We conducted an extensive ablation study to investigate the limitations, and some practical aspects, of our method, focusing on the key design choices of the KF ensemble (phase 1). For conciseness, some of the studies (\ref{s1} -- \ref{s2} below) are only briefly summarized; full details can be found in the supplementary material, \app\ref{app:additionalablation}.

\begin{table}[!htbp]
\begin{center}
\renewcommand{\arraystretch}{1.3}
\scriptsize
\caption{Comparison Across Architectures on ImageNet}
\label{table:archablation}

\resizebox{\columnwidth}{!}{%

\begin{tabular}{l||c|c|c|c}
\textbf{Method/Architecture} & ResNet50 & ConvNeXt-L & ViT-B/16 & MaxViT-T \\ 
\hline
\emph{Baseline network} & $75.74 \pm .14$ & $82.92 \pm .11$ & $79.16 \pm .1$ & $82.51 \pm .15$ \\
\emph{Horizontal} & $76.42 \pm .1$ & $83.02 \pm .06$ & $79.53 \pm .13$ & $82.93 \pm .14$ \\
\emph{Fixed jumps} & $75.72 \pm .18$ & $83.86 \pm .06$ & $79.11 \pm .13$ & $83.78 \pm .15$ \\
\emph{KF} & $\mathbf{76.52 \pm .16}$ & $\mathbf{83.96 \pm .09}$ & $\mathbf{80.34 \pm .08}$ & $\mathbf{83.81 \pm .14}$ \\
\hline
\emph{Improvement} & $.78 \pm .04$ & $1.03 \pm .13$ & $1.17 \pm .08$ & $1.29 \pm .02$ \\

\end{tabular}

}
\end{center}
%\raggedright

{\footnotesize \vspace{0.25cm} Mean test accuracy (in percent) and standard error on ImageNet, comparing our method and baselines across several modern architectures. Bolded values indicate the best performer in each column.}
\end{table}

\subsubsection{Architecture Choice}
To demonstrate the robustness and generality of our method, we evaluate it across a wide range of contemporary neural network architectures. Given the increased representational capacity of these models, we apply them to the ImageNet dataset, which presents a more challenging and large-scale benchmark compared to CIFAR-100 or TinyImageNet. As shown in Table~\ref{table:archablation}, our method consistently outperforms all baselines across all architectures.

\subsubsection{Comparisons to Additional Baselines} 
In addition to the condensing methods presented in Section~\ref{sec:KnowledgeDistillation}, an alternative method to combining checkpoints is to apply an exponential moving average (EMA) to the model weights during training. This technique is known to have some advantages \citep{polyak1992acceleration} and is used sometimes to reduce overfitting \citep{liu2022convnet,dosoViTskiy2020image}, see \citep{tu2022MaxViT} for example. 
In Table~\ref{table:ema}, we explore this option for two datasets while using the same ResNet-18 architecture, showing that our method can be of use when EMA isn't effective, and that EMA improves performance much less than our method.

\begin{table}[t!]
\renewcommand{\arraystretch}{1.3}
\caption{Comparison with EMA}
\label{table:ema}
\begin{center}
\footnotesize
\begin{tabular}{l| c||c}
\multicolumn{1}{ c |}{Method/\textbf{Dataset}} & \multicolumn{1}{ c ||}{\textbf{CIFAR-100}} & \multicolumn{1}{ c }{\textbf{TinyImageNet}}  \\ 
\hline
\emph{EMA}\scriptsize{ (decay = 0.999)} & $-0.34 \pm .14$ & $0.73 \pm .11$ \\
\emph{EMA }\scriptsize{ (decay = 0.9999)} & $-0.06 \pm .33$ & $2.51 \pm .01$ \\
\emph{KF} & $\mathbf{1.05} \pm .14$ & $\mathbf{3.54} \pm .14$ \\
\end{tabular}
\end{center}

%\raggedright
{\footnotesize \vspace{0.25cm} Mean (over random validation/test split) improvement in test accuracy (in percent) and standard error on image classification datasets, comparing our method and EMA with different decay values. We use the best epoch for EMA, calculated using the validation set.}
\vspace{-0.25cm}
\end{table}

% \subsubsection{Early stopping}
% Our analysis primarily focused on overfitting that persists even when test accuracy does not decline as training progresses—rendering the "early stopping" strategy, which halts training when validation performance worsens, less effective. To complete the discussion, here we compare our method against early stopping (ES) on datasets both with and without label noise. We also include test-time augmentation (TTA) in this comparison. In TTA, each test sample is classified multiple times using different augmentations, and the final prediction is obtained by averaging the class probabilities from all augmented versions. The results in Table~\ref{table:tta} show that our method performs comparably to—or better than—both ES and TTA, even in the presence of label noise (which typically degrades performance as training continues). Moreover, our approach is complementary to TTA, offering potential benefits when used together.

\begin{table}[h!]
\footnotesize
\begin{center}
\renewcommand{\arraystretch}{1.3}
\caption{Transfer Learning Comparison}
\label{table:transferlearning}
\begin{tabular}{l|| c|c}
\multicolumn{1}{ c ||}{Method/\textbf{Dataset}} & \multicolumn{1}{ c |}{\textbf{CIFAR-100}} & \multicolumn{1}{ c }{\textbf{TinyImageNet}}  \\ 
\hline
\scriptsize{\emph{fully finetuned Resnet18}} & $80.72 \pm .53$ & $75.01 \pm .12$ \\
\scriptsize{\emph{fully finetuned Resnet18 + KF}} & $\mathbf{81.64 \pm .27}$ & $\mathbf{75.6 \pm .18}$ \\
\hline
\scriptsize{\emph{partially finetuned Resnet18}} & $61.7 \pm .60$ & $54.78 \pm .13$ \\
\scriptsize{\emph{partially finetuned Resnet18 + KF}} & $\mathbf{65.24 \pm .67}$ & $\mathbf{59.5 \pm .03}$ \\
\end{tabular}
\end{center}

%\raggedright
{\footnotesize \vspace{0.25cm} Mean test accuracy over random validation/test split. Our method uses ResNet18 pre-trained on ImageNet, while finetuning the entire model (top) or only the head (bottom).}
\end{table}

\begin{table*}[thb!]
\begin{center}
\renewcommand{\arraystretch}{1.3}
\footnotesize
\caption{Test Time Augmentation}
\label{table:tta}
\begin{tabular}{l| c || c|c|c || c|c || c}
\multicolumn{1}{ c |}{Method/\textbf{Dataset}} & \multicolumn{1}{ c ||}{\textbf{CIFAR-100}} & \multicolumn{3}{ c ||}{\textbf{CIFAR-100 asym}} & \multicolumn{2}{ c || }{\textbf{CIFAR-100 sym}} & \multicolumn{1}{ c }{\textbf{TinyImageNet}} \\ 
\hline
\multicolumn{1}{ r |}{\% label noise} & 0\% & 10\% & 20\% & 40\%  & 20\% & 40\% & 0\% \\
\toprule
% \emph{ES} & $78.13 \pm .4$ & $71.91 \pm .2$ & $68.54 \pm .3$ & $51.53 \pm .3$ & $68.16 \pm .4$ & $61.17 \pm .2$ & $65.44 \pm .3$ \\
\emph{TTA} & $79.21 \pm .3$ & $72.97 \pm .1$ & $71.00 \pm .1$ & $54.53 \pm .1$ & $70.14 \pm .2$ & $63.59 \pm .1$ & $65.67 \pm .3$ \\
% \emph{ES + TTA} & $79.11 \pm .2$ & $73.14 \pm .2$ & $70.46 \pm .2$ & $53.93 \pm .5$ & $70.21 \pm .1$ & $63.57 \pm .1$ & $65.74 \pm .2$ \\
\emph{KF } & $79.36 \pm .17$ & $73.61 \pm .1$ & $71.24 \pm .5$ & $56.19 \pm .8$ & $72.21 \pm .3$ & $65.75 \pm .1$ & $68.33 \pm .17$ \\
\emph{KF + TTA} & $\mathbf{79.55 \pm .3}$ & $\mathbf{74.83 \pm .1}$ & $\mathbf{72.65 \pm .2}$ & $\mathbf{57.71 \pm .3}$ & $\mathbf{72.33 \pm .2}$ & $\mathbf{65.71 \pm .1}$ & $\mathbf{69.05 \pm .3}$ \\
\end{tabular}
\end{center}

%\raggedright
{\footnotesize \vspace{0.25cm} Mean test accuracy (in percent) and standard error of ResNet-18, comparing KF with Test Time Augmentation (TTA) on datasets with and without label noise.}
\end{table*}

\subsubsection{Transfer Learning} 
Another popular method to improve performance and reduce overfitting employs transfer learning, in which the model weights are initialized using pre-trained weights over a different task, for example, ImageNet pretrained weights. This is followed by \emph{fine-tuning} either the entire model or only some of its layers (its head for example). In Table~\ref{table:transferlearning}, we show that our method is complementary to transfer learning, improving performance in this scenario as well. Notably, when fine-tuning only the final layers, the overhead of KF is \textbf{negligible}, as the majority of the model is used only once at inference. The only overhead comes from the memory and inference costs of storing and processing the different checkpoint heads used during fine-tuning, which are small in comparison to the size of the full model.

\subsubsection{Test Time Augmentation}
In TTA, each test sample is classified multiple times using different augmentations, and the final prediction is obtained by averaging the class probabilities from all augmented versions. The results in Table~\ref{table:tta} show that our method is comparable or better than TTA, even in the presence of label noise. Moreover, our approach is complementary to TTA, offering potential benefits when used together.

\subsubsection{Model Size} A common practice nowadays is to use very large neural networks, with hundreds of millions of parameters, or even more. However, enlarging models does not always improve performance, as a large number of parameters can lead to overfitting. In Fig.~\ref{subfig:modelsizeforget} we show that indeed the forget fraction increases with the model's size, which also improves the benefit of our approach (see Table~\ref{table:modelsize}), making it especially useful while using large models.

\begin{table}[thb!]
\begin{center}
\renewcommand{\arraystretch}{1.3}
\footnotesize

% Caption should be above the table and centered
\caption{Comparison of Model Size}
\label{table:modelsize}
\begin{tabular}{l|| c|c|c}
\multicolumn{1}{ c ||}{Method/\textbf{model size}} & \multicolumn{1}{ c |}{\textbf{small}} & \multicolumn{1}{ c | }{\textbf{base}} & \multicolumn{1}{ c }{\textbf{large}}  \\ 
\hline
\emph{single network}   & $83.21 \pm .01$ & $83.31 \pm .15$ & $82.92 \pm .09$ \\
\emph{KF}   & $83.17 \pm .04$ & $\mathbf{83.57 \pm .15}$ & $\mathbf{83.96 \pm .09}$ \\
\end{tabular}
\end{center}
% Space before the footnote-sized caption

% The caption text aligned to the left with footnote size
%\raggedright
{\footnotesize \vspace{0.25cm} Mean (over random validation/test split) test accuracy (in percent) and standard error on image classification datasets, comparing our method and the original predictor (ConvNeXt, trained on ImageNet) with a varying number of parameters.}
\vspace{-0.25cm}
\end{table}

% \subsubsection{Using Training Set for Validation} As shown in \app\ref{abl:val}, a separate validation set is not essential for our method to perform well.
\subsubsection{Using Training Set for Validation}
\label{s1}
The results reported above summarize experiments in which half of the \emph{test data} was used for validation, to evaluate our method's hyper-parameters (the list of alternative epochs and $\{\varepsilon_i\}$). The accuracy reported above was computed over the remaining test data, averaged over three random splits of validation and test data with different random seeds. In our ablation study we repeated these experiments using the original train/test split, where a subset of the training data is set aside for hyper-parameter tuning.  As customary, these same parameters are later used with models trained on the full training set, demonstratively without deteriorating the results.

% \subsubsection{Number of Checkpoints} \app\ref{subsec:checkpointsnum} explores how many checkpoints are needed for Knowledge Fusion to be effective, revealing that using only 5–10\% of past checkpoints is sufficient.
\subsubsection{Number of Checkpoints}
Knowledge Fusion remains effective when using only a small fraction (5–10\%) of the available training checkpoints.

% \subsubsection{Optimal vs Sub-Optimal Training} In \app\ref{abl:subopttrain}, we examine the benefits of our method under limited hyperparameter tuning, which results in sub-optimal training. Interestingly, our approach proves even more effective in this regime, narrowing the gap between optimal and sub-optimal models.
\subsubsection{Optimal vs Sub-Optimal Training}
KF is particularly useful under limited hyperparameter tuning; in sub-optimal training regimes, it helps close the gap to optimally trained models.

% \subsubsection{Fairness} \app\ref{abl:fairness} demonstrates that our method does not negatively impact model fairness.
\subsubsection{Fairness}
KF does not negatively affect model fairness across standard fairness benchmarks.

% \subsubsection{Window Size} As shown in \app\ref{abl:window}, using a window size of $w=1$ is both necessary and nearly optimal for the method’s effectiveness.
\subsubsection{Window Size}
\label{s2}
We find that using a window size of $w=1$ when averaging checkpoints is both necessary and near-optimal for the effectiveness of the method.

% \subsection{Implementation Details}
% \label{subsec:KDresults}

% \myparagraph{The KF Ensemble}

\subsection{Summary and Discussion}  
\label{subsec:KKFDdiscussion}

Our method significantly improves performance in modern neural networks. It complements other overfitting reduction methods like EMA and proves effective where these methods fail, as in fine-tuning of pre-trained models. In challenging scenarios, such as small networks handling complex data or datasets with label noise, it further enhances performance, reducing errors by around 15\% in cases of 10\% asymmetric noise. Our approach outperforms or matches baselines, especially in settings like ViT16 over ImageNet and Resnet18 over TinyImageNet, regardless of training choices. Unlike some horizontal methods and fixed-jump schedules that show limited improvement, our method remains effective without extensive hyper-parameter tuning. %Despite needing validation data and multiple checkpoints, these limitations can be mitigated, and the method can revert to a single network if no validation set improvement is observed.

We provide a spectrum of model variants derived from the core KF algorithm, offering different trade-offs between performance, training cost, and inference efficiency. While the distilled model consistently yields the strongest results, simple weight averaging provides clear improvements over the vanilla model, without requiring an additional training phase or the maintenance of multiple networks at inference time. This flexibility allows practitioners to choose the variant that best aligns with their computational and deployment constraints.

%The results (Tables~\ref{table:regularnetworksKD}-\ref{table:noisyKD}) show that this Average KF, while not achieving the same level of accuracy as the KF ensemble or the distilled KF model, demonstrates a noticeable improvement over the baseline single network—approximately 4\%—particularly in noisy scenarios (Table~\ref{table:noisyKD}). Importantly, as seen in Table~\ref{table:regularnetworksKD}, this approach does not degrade performance on clean data. This makes it a viable option for mitigating local overfitting in base models when further distillation is not feasible, or when maintaining an ensemble of multiple models is impractical. In such cases, simple weight averaging offers a practical and efficient way to enhance performance in noisy environments without compromising accuracy on clean data.

%#######################

\section{Forgotten Knowledge: Theoretical Angle} %Theoretical and Empirical Investigation}
\label{sec:TheoryForgottenKnowledge}

To gain insight into the nature of knowledge forgotten while training a deep model with Gradient Descent (GD), we analyze over-parameterized deep linear networks trained by GD. These models are constructed through the concatenation of linear operators in a multi-class classification scenario: $\by = W_L \cdot \ldots \cdot W_1 \bx$, where $\bx \in \R^d$.  % denotes a data point and $\by$ is its corresponding one-hot label vector. 
For simplicity, we focus on the binary case with two classes, with the following objective function:
%\vspace{-.2cm}
\begin{equation}
%\vspace{-.2cm}
%\begin{split}
\label{eq:object}
\min_{W_1,\ldots,W_L}\sum_{i=1}^n \Vert W_L\cdot\ldots \cdot W_1 \bx_i - y_i \Vert^2  
%\end{split}
\end{equation}
Above the matrices $\{W_l\}_{l=1}^L$ represent the $2D$ matrices corresponding to $L$ layers of a deep linear network, and points $\{\bx_i\}_{i=1}^n$ represent the training set with labeling function $y_i=\pm 1$ for the first and second classes, respectively. Note that $\bW = \prod_{l=L}^1 W_l$ is a row vector that defines the resulting separator between the classes. The classifier is defined as:
%\begin{equation*}
$f(\bx) = {\text{sign}}~ \left (\prod_{l=L}^1 W_l \bx \right )$ %,\qquad \
for $\bx\in\R^d$.
%\end{equation*}

\myparagraph{Preliminaries.}
%In \citep{hacohen2022principal} it has shown that the convergence of the model parameters $\{W_l\}_{l=1}^L$, when training with GD, is governed by the eigendecomposition of the raw data. More formally, %using the superscript $\!\raisebox{2pt}{\small{(n)}}$ to denote parameters' value after gradient step $n$,
Let $\bW^{(n)} = \prod_{l=L}^1 W_l^{(n)}$ denote the separator after $n$ GD steps, where $\bW^{(n)} \equiv [w_1^{(n)}, \ldots, w_d^{(n)}] \in \R^d$. For convenience, we rotate the data representation so that its axes align with the eigenvectors of the data's covariance matrix. \citep{hacohen2022principal} showed that the convergence rate of the $j^\mathrm{th}$ element of $\bW$ with respect to $n$ is exponential, governed by the corresponding $j^\mathrm{th}$ eigenvalue:
\begin{equation}
\label{eq:linear-comb}
w_j^{(n)}\approx \lambda_j^{n}w_j^{(0)} + [1- \lambda_j^{n}]w_j^{opt}, \qquad \lambda_j = 1-\gamma s_j L
\end{equation}
Here, $\bW^{(0)}$ denotes the separator at initialization, $\bw^{opt}$ denotes the optimal separator (which can be derived analytically from the objective function), $s_j$ represents the $j^\mathrm{th}$ singular value of the data, and $\gamma$ is the learning rate. Notably, while $\bw^{opt}$ is unique and $\prod_{l=L}^1 W_l^{(n)}\xrightarrow[n \to \infty]{} \bw^{opt}$, the specific solution at convergence $\{W_l^{(\infty)}\}_{l=1}^L$ is not unique.

\myparagraph{Forget Time in Deep Linear Models.}
%\label{sec:linear-model-charact}
Let $\Lambda$ denote $\text{diag}(\{\lambda_j\})$ - a diagonal matrix in $\R^{d\times d}$, and I the identity matrix. It follows from (\ref{eq:linear-comb}) that
\begin{equation}
\label{eq:linear-comb-mat}
\bW^{(n)}\approx \bW^{(0)}\Lambda^{n} + \bW^{opt}[I- \Lambda^{n}]
\end{equation}
% Let $\Delta \bW^{(n)} =\bW^{(n+1)} - \bW^{(n)}$. Then 
% \begin{equation*}
% \Delta \bW^{(n)} \approx  (\bW^{(0)} -\bW^{opt}) (\Lambda^{n+1} - \Lambda^{n})
% \end{equation*}

We say that a point is forgotten if it is classified correctly at initialization, but not so at the end of training. Let $\bx$ denote a forgotten datapoint, and let $N$ denote the number of GD steps at the end of training. Since by definition $f(\bx) = {\text{sign}} (\bW^{(n)} \bx)$, it follows that $\bx$ is forgotten iff 
$\{\bW^{(0)} y\bx > 0\}$ and $\{\bW^{(N)} y\bx < 0\}$. 

Let us define the forget time of point $\bx$ as follows:
\begin{defn}[Forget time]
\label{def:1}
GD iteration $\hat n$ that satisfies 
\begin{equation}
\label{eq:forget-time}
\begin{split}
&\bW^{(\hat n)} y\bx \leq 0 \\
& \bW^{(n)} y\bx > 0 \qquad \forall n<\hat n
\end{split}
\end{equation}
\end{defn}
\begin{claim}
\label{eq:forgt-time}
Each forgotten point has a finite forget time $\hat n$.
\end{claim}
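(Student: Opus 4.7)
\medskip

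The plan is to reduce the claim to the discrete well-ordering principle applied to the scalar sequence $a_n := \bW^{(n)} y\bx$, $n = 0, 1, \ldots, N$, where $N$ is the (finite) number of GD steps at the end of training. By the definition of a \emph{forgotten} point given just before Definition~\ref{def:1}, the endpoints of this sequence satisfy $a_0 > 0$ (the point is classified correctly at initialization, i.e.\ $\mathrm{sign}(\bW^{(0)}\bx) = y$) and $a_N < 0$ (it is misclassified at the end of training, i.e.\ $\mathrm{sign}(\bW^{(N)}\bx) = -y$).

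First I would form the set $S := \{\, n \in \{0, 1, \ldots, N\} : a_n \leq 0 \,\}$. Since $a_N < 0$ we have $N \in S$, so $S$ is non-empty and bounded below by $0$. By the well-ordering of the natural numbers, $S$ has a minimum; call it $\hat n$. Then by construction $0 \leq \hat n \leq N$, which already gives $\hat n < \infty$. Next I would verify that this $\hat n$ satisfies the two conditions of Definition~\ref{def:1}: the inequality $\bW^{(\hat n)} y\bx = a_{\hat n} \leq 0$ holds by membership $\hat n \in S$, while the strict inequality $\bW^{(n)} y\bx = a_n > 0$ for every $n < \hat n$ follows from minimality of $\hat n$ (any such $n$ lies outside $S$, and the complement of $S$ in $\{0,\ldots,N\}$ is exactly the set where $a_n > 0$). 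Note also that $\hat n \geq 1$, because $a_0 > 0$ excludes $0$ from $S$.

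There is essentially no obstacle: the claim is a discrete intermediate-value statement, and the key input is simply that the training horizon $N$ is finite and that the sign of $a_n$ changes between $n=0$ and $n=N$ by the very definition of being forgotten. No use of the explicit convergence formula (\ref{eq:linear-comb-mat}) is needed for existence or finiteness of $\hat n$; that machinery would only become relevant if one wanted to estimate or bound $\hat n$ quantitatively in terms of the eigenvalues $\{\lambda_j\}$, which is not asserted here.
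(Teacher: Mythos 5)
Your proposal is correct and is essentially the same argument the paper gives: the paper's one-line proof ("follows by induction" from $\bW^{(0)}y\bx>0$ and $\bW^{(N)}y\bx<0$) is exactly the first-crossing / well-ordering argument you spell out in detail. Your version is just a more explicit write-up of that discrete intermediate-value reasoning, and your observation that the convergence formula is not needed here is also consistent with the paper.
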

\begin{proof}
Since $\{\bW^{(0)} y\bx > 0\}$ and $\{\bW^{(N)} y\bx < 0\}$, (\ref{eq:forget-time}) follows by induction.
\end{proof}

Note that Def~\ref{def:1} corresponds with the \emph{Forget time} seen in deep networks (cf. Fig.~\ref{subfig:forgettimetimg}). The empirical investigation of this correspondence is discussed in \citep{stern2025overfit}.

%\subsection{Properties of the Forget Time}

To characterize the time at which a point is forgotten, we inspect the rate at which $F (n)=\bW^{(n)} y\bx $ changes with $n$. We begin by assuming that the learning rate $\gamma$ is infinitesimal, so that terms of magnitude $O(\gamma^2)$ can be neglected. Using (\ref{eq:linear-comb-mat}) and the Taylor expansion of $\lambda_j$ from (\ref{eq:linear-comb})
\begin{equation*}
\begin{split}
F(n) \approx & \left ( \bW^{(0)} - \bW^{opt}  \right ) \Lambda^{n} y \bx + \bW^{opt} y \bx\\
=&  ~\bW^{opt} y \bx + \sum_{j=1}^d ( w_j^{(0)} - w_j^{opt}  ) \lambda_j^{n} y x_j \\
=&  ~\bW^{opt} y \bx + \sum_{j=1}^d ( w_j^{(0)} - w_j^{opt}  ) [\mathsmaller{\mathsmaller{1-n\gamma s_j L + O(\gamma^2) }}] y x_j \\
=&  ~ \bW^{(0)} y \bx - n  \gamma L\sum_{j=1}^d ( w_j^{(0)} - w_j^{opt} ) y s_j x_j + O(\gamma^2) 
\end{split}
%\label{eq:forget-time}
\end{equation*}
It follows that
\begin{equation}
\label{eq:forget-time-diff}
\frac{d F(n) }{d n} = -\gamma  y L\sum_{j=1}^d ( w_j^{(0)} - w_j^{opt} ) s_j x_j + O(\gamma^2)
\end{equation}

\noindent
\textbf{Discussion.} 
Recall that $\{s_j\}$ is the set of singular values, ordered such that $s_1 \geq s_2 \geq \dots \geq s_d$, and $x_j$ is the projection of point $\bx$ onto the $j^{\text{th}}$ eigenvector. From (\ref{eq:forget-time-diff}), the rate at which a point is forgotten, if at all, depends on vector $[s_j x_j]_j$, in addition to the random vector $\bW^{(0)} - \bW^{\text{opt}}$ and label $y$. All else being equal, a point will be forgotten faster if the length of its spectral decomposition vector $[x_j]$ is dominated by its first  components, indicating that most of its mass is concentrated in the leading principal components.

%#######################

\section{Summary and Conclusions}

We revisited the problem of \emph{overfitting} in deep learning by proposing a method to track the forgetting of validation data as a means to detect local overfitting. We linked this perspective to the phenomenon of \emph{epoch-wise double descent}, empirically extending its scope and showing that a similar effect arises even in benchmark datasets with clean labels. Motivated by these empirical insights, we introduced a simple yet general method to improve classification performance at inference time. We then demonstrated its effectiveness across a range of datasets and modern network architectures. These results confirm that models do forget useful information in the later stages of training, and provide a proof of concept that recovering this knowledge can lead to improved performance. When combining this method with Knowledge Distillation, we showed enhanced performance accompanied by reduced complexity - particularly in noisy environments - offering a win-win strategy that improves accuracy while reducing both training and inference costs.

% Acknowledgment
\subsection*{\textbf{Acknowledgments}}
This work was supported by grants from the Israeli Council of Higher Education, AFOSR award FA8655-24-1-7006, and the Gatsby Charitable Foundation.

% Bibliography
\bibliographystyle{IEEEtran}
\bibliography{bib}

\newpage

\begin{figure*}[htbp]
    \centering
    \subfloat[\footnotesize ImageNet, ResNet50, steplr\label{subfig:r18forgetImg}]{
        \includegraphics[width=0.26\linewidth]{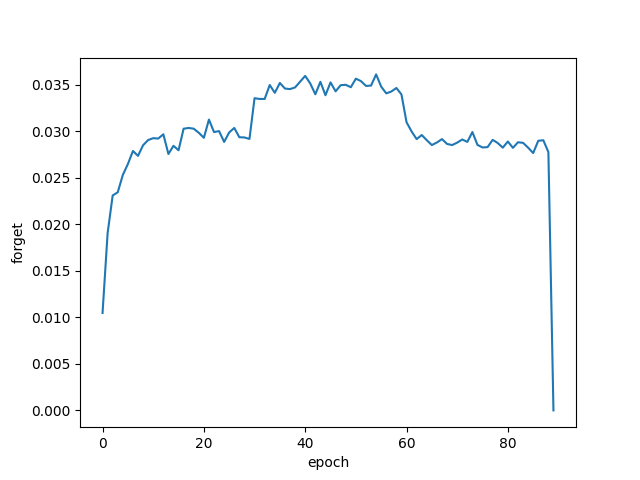}
    }\hspace{-0.25cm}
    \subfloat[\footnotesize Cifar100, DenseNet121\label{subfig:ForgetDenseC100}]{
        \includegraphics[width=0.235\linewidth]{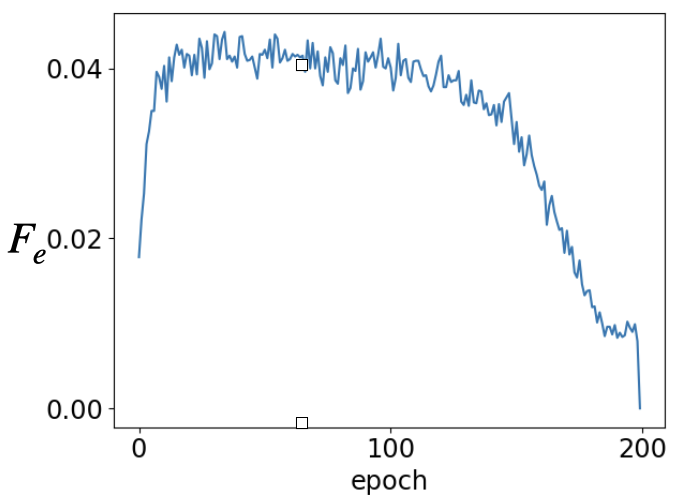}
    }\hspace{-0.25cm}
    \subfloat[\footnotesize TinyImageNet, DenseNet121\label{subfig:Denseforgettiny}]{
        \includegraphics[width=0.235\linewidth]{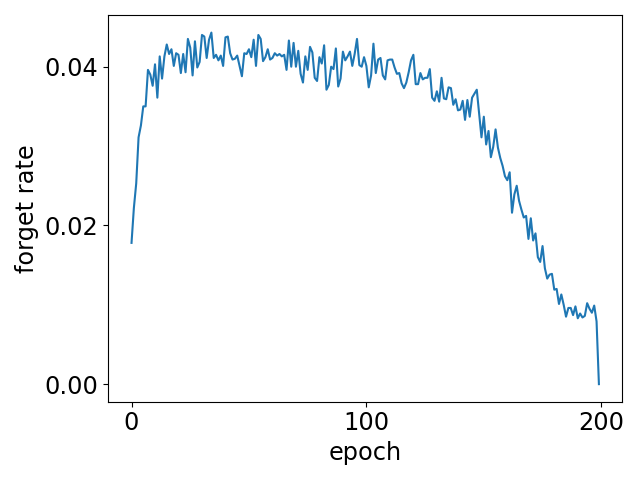}
    }\hspace{-0.25cm}
    \subfloat[\footnotesize TinyImageNet, ResNet18\label{subfig:r18forgetTimg}]{
        \includegraphics[width=0.235\linewidth]{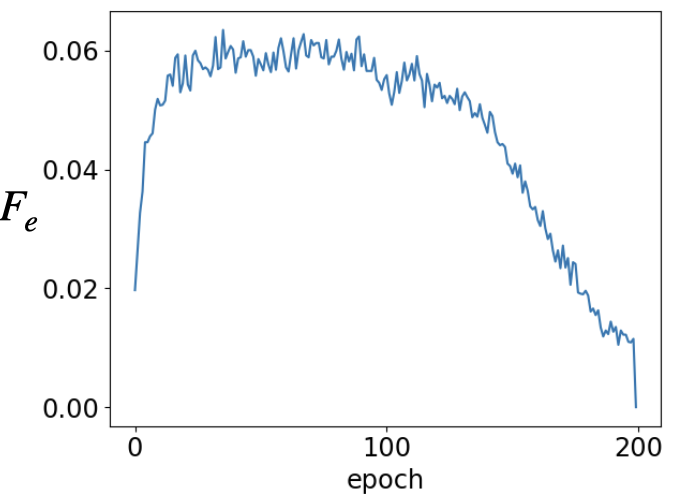}
    }
    \caption[forget]{The forget fraction $F_e$, as defined in (\ref{eq:forget}), of common neural networks trained on various image classification datasets and different architectures.}
    \label{fig:moreforgetrate}
\end{figure*}

\begin{figure*}[h!]
    \centering
     \subfloat[\label{subfig:trainsizeforget}]{
        \includegraphics[width=0.28\linewidth]{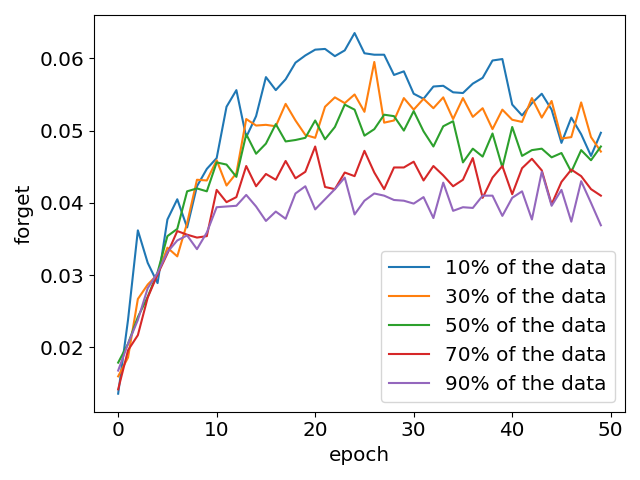}
    }\hspace{.2cm}
     \subfloat[\label{subfig:cifar100sumcorrect}]{
        \includegraphics[width=0.28\linewidth]{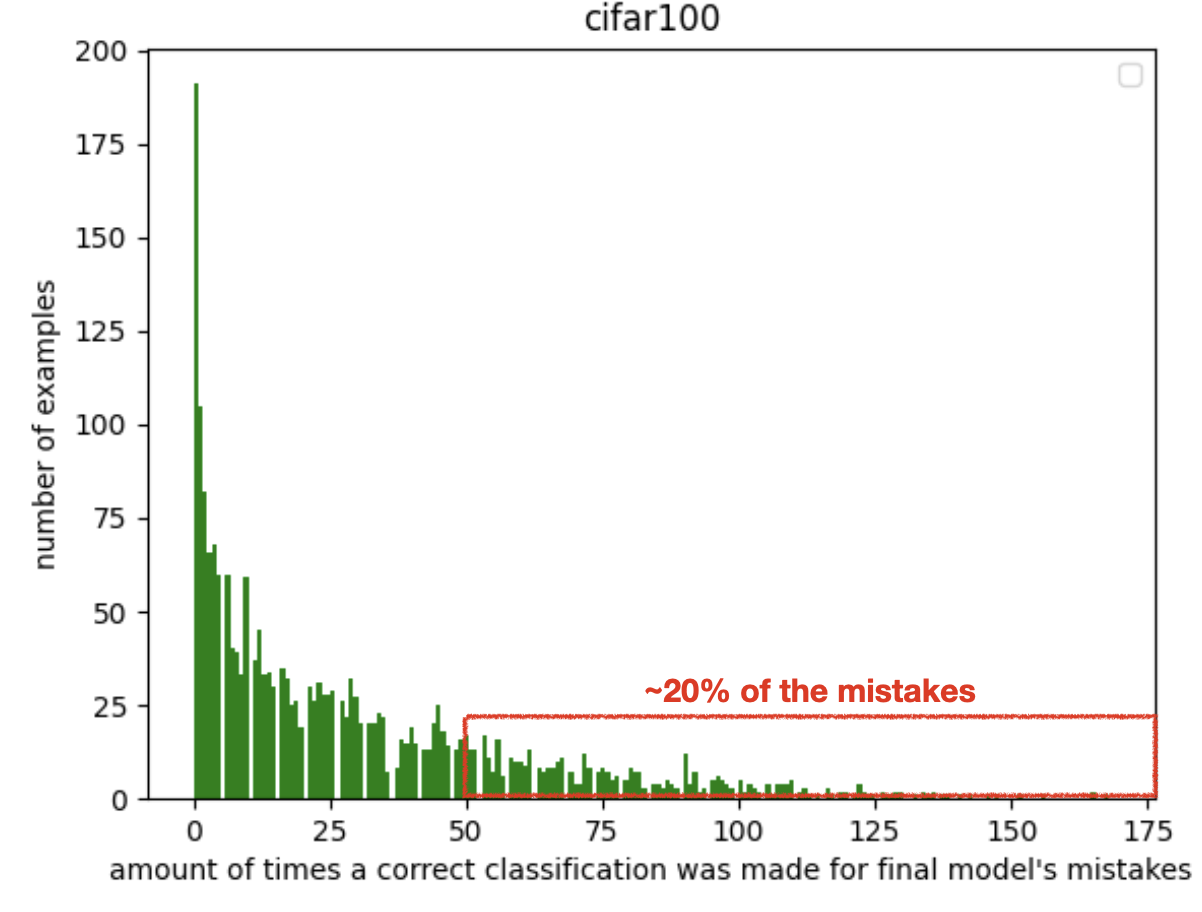}
    }\hspace{.2cm}
     \subfloat[\label{subfig:forgettimecifar100}]{
        \includegraphics[width=0.28\linewidth]{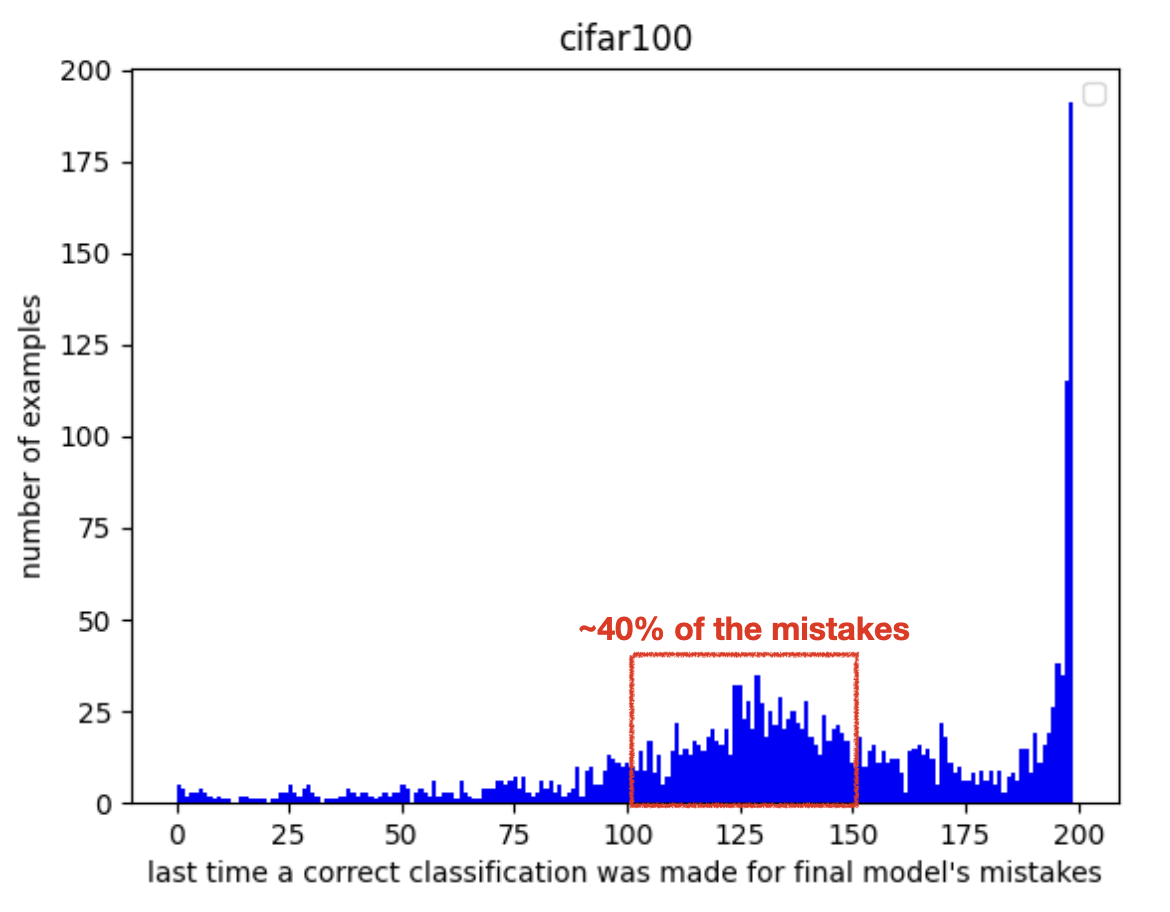}
    }
   \caption[forget]{CIFAR100, ResNet18: (a) The $F_e$ score of ResNet18 trained on $10/30/50/70/90\%$ of the train data in CIFAR-100 (purple/red/green/yellow/blue line, respectively) in the first 50 epochs of training (after which the score decreases); $F_e$ is significantly larger for the smallest set of only $10\%$. (b-c) Within the set of wrongly classified test points after training, we show (b) the fraction that was correctly predicted (y-axis) for x epochs (x-axis), and (c) the last epoch in which an example was classified correctly.}
    \label{fig:forgetrate}
\end{figure*}

\newpage

%\clearpage  
\appendices
% \appendix
\setcounter{page}{1}

\section{Additional Demonstrations of Forgetting}
\label{app:moreforget}

We first show more examples of various neural networks trained on different datasets which show significant forgetting during training (Fig.~\ref{fig:moreforgetrate}), in order to further demonstrate the generality of this phenomenon. dditional results on local overfitting with different datasets are shown in Fig.~\ref{fig:forgetrate}.

\section{Full Implementation Details}
\label{app:implementationdetails}

We conducted experiments on four image classification datasets: ImageNet \citep{deng2009imagenet}, CIFAR-100 \citep{krizhevsky2009learning}, TinyImageNet \citep{le2015tiny}, and CIFAR-100N \cite{wei2022learning}. For ImageNet, we trained ResNet \citep{he2016deep}, ConvNeXt \citep{liu2022convnet}, ViT \citep{dosoViTskiy2020image}, and MaxViT \citep{tu2022MaxViT} using the torchvision training code\footnote{https://github.com/pytorch/vision/tree/main/references/classification} with recommended hyperparameters, except ConvNeXt, where we followed its official implementation\footnote{https://github.com/facebookresearch/ConvNeXt} without EMA (see Section~\ref{subsec:ablation} for comparison to EMA). For transfer learning, we resized images to $224 \times 224$, used a 0.001 learning rate, and initialized with ImageNet weights. We either finetuned the full network or trained a new two-layer head, with the first layer outputting 100× the embedding size.

On CIFAR-100 and TinyImageNet without label noise, all models were trained for 200 epochs with batch size 32, learning rate 0.01, SGD (momentum 0.9, weight decay 5e-4), cosine annealing, and standard augmentations (horizontal flip, random crop). For noisy label experiments, we used cosine annealing with warm restarts (every 40 epochs), a 0.1 learning rate updated per batch, and larger batch sizes: 64 for CIFAR-100 and 128 for TinyImageNet. %In the suboptimal training described in \citep{stern2025overfit}, each image was cut before training into its central 224 over 224 pixels (images smaller than this size were first resized such that the smallest dimension was of size 224, then cut into 224 over 224). 

To evaluate the effectiveness of combining Knowledge Fusion with Knowledge Distillation, several experiments were conducted including a variety of KD hyperparameters. While the results presented above in Tables~\ref{table:regularnetworksKD} and~\ref{table:noisyKD}) use the most promising configuration (\( T = 2.5 \) and \( \alpha = 0.9 \)), we note that they are robust—similar performance is observed across a wide range of hyperparameter values.

To ensure fair comparison, we trained all methods in Table~\ref{table:specialmethods} from scratch using our architecture and data. For \citep{huang2017snapshot}, we followed the original protocol; for \citep{izmailov2018averaging, garipov2018loss}, we applied our training scheme -- since these methods augment existing training -- and tuned hyperparameters for optimal performance. Experiments were run on A5000 GPUs.

% \begin{table*}[thb!]
\begin{table*}[htbp]
\begin{center}
\footnotesize
% \scriptsize
%\tiny
\caption{}
\label{table:fairness}
  \centering
  \begin{tabular}{l| c|c|c||c|c|c}
%    \toprule
    \multicolumn{1}{ c |}{Dataset/\textbf{Method}} & \multicolumn{3}{ c ||}{\textbf{original model}} & \multicolumn{3}{ c }{\textbf{KF}} 
    \\ 
        \hline
 %   \midrule
    \multicolumn{1}{ r |}{evaluation metric} & natural accuracy&  transformed accuracy & bias & natural accuracy& transformed accuracy & bias \\
    \hline
        \emph{w/o color}   & $89.07 \pm .48$ & $87.98 \pm .38$ & $0.07 \pm .001$ & $89.90 \pm .40$ & $87.85 \pm .48$ & $0.07 \pm .002$ \\
        \emph{Center cropped to 28x28}   & $88.45 \pm .31$ & $70.44 \pm .44$ & $0.13 \pm .003$ & $88.92 \pm .32$ & $70.21 \pm .74$ & $0.13 \pm .004$\\
        \emph{Downsampled to 16x16}   & $85.43 \pm .32$ & $76.70 \pm .13$ & $0.08  \pm .001$ & $86.55 \pm .27$ & $77.47 \pm 14$ & $0.07  \pm .001$\\
        \emph{Downsampled to 8x8}   & $80.061 \pm .33$ & $52.03 \pm .49$ & $0.22 \pm .002$ & $81.48 \pm .44$ & $52.99 \pm .49$ & $0.21 \pm .003$\\
        \emph{With ImageNet replacements}   & $88.45 \pm .31$ & $70.44  \pm .44$ & $0.13 \pm .003$ & $88.92 \pm .32$ & $70.44 \pm .44$ & $0.13 \pm .004$\\
%    \bottomrule
  \end{tabular}
  
  \vspace{0.2cm}
% \raggedright
\end{center}
\footnotesize{CIFAR10: Mean (over random validation/test split) test accuracy and amplification bias (in percent) and standard error on natural and transformed test sets, comparing our method and the original model.} 
  
\end{table*}

\subsection*{Injecting label noise} 
%For the label noise experiments we inject noisy labels using two standard methods \citep{patrini2017making}:
% \begin{enumerate}
% [leftmargin=0.65cm,noitemsep]
% \item \textbf{Symmetric noise:} a fraction $p \in \{0.2, 0.4, 0.6\}$ of the labels is selected randomly. Each selected label is switched to any other label with equal probability. %We use $p = \{0.2, 0.4, 0.6\}$.
% \item \textbf{Asymmetric noise:} a fraction $p$ of the labels is selected randomly. Each selected label is switched to another label using a deterministic permutation function. 
% \end{enumerate}
For label noise experiments, we injected noise using two standard methods \citep{patrini2017making}:
\begin{enumerate}
\item \textbf{Symmetric noise:} a fraction $p \in {0.2, 0.4, 0.6}$ of labels is randomly selected and replaced uniformly with a different label.
\item \textbf{Asymmetric noise:} a fraction $p$ of labels is randomly selected and altered using a fixed label permutation.
\end{enumerate}
% \section{Additional evaluations}
% \label{app:additionaleval} 
% Table~\ref{table:additional results} shows  additional evaluation settings of our method and baselines on dataset with injected label noise.

% \begin{table}[thb!]
% \footnotesize
% %\scriptsize
% \caption{Comparison with other Baselines}
% \label{table:additional results}
%   \centering
%   \begin{tabular}{l| c||c|c}
% %    \toprule
%     \multicolumn{1}{ c |}{Method/\textbf{Dataset}} & \multicolumn{1}{ c ||}{\textbf{CIFAR-100 sym}} & \multicolumn{2}{ c }{\textbf{TinyImageNet}} 
%     \\ 
%     \multicolumn{1}{ r |}{\% label noise} &     60\% & 20\% & 40\% \\
%     \toprule
%         \emph{FGE}   & $38.3 \pm .7$ & $53.8 \pm .1$&$40.4 \pm .3$ \\
%         \emph{SWA}   & $30.5 \pm .7$ & $52.5 \pm .2$&$39.4 \pm .3$ \\
%         \emph{snapshot}   & $55.6 \pm .2$ & $62.6 \pm .1$ & $56.5 \pm .3$ \\
%         \emph{KF}   & $\mathbf{57.6 \pm .2}$ & $\mathbf{62.22 \pm .38}$ &$\mathbf{55.6 \pm .39}$\\
% %    \bottomrule
%   \end{tabular}
  
% \vspace{0.2cm}
% \raggedright
% \footnotesize{Mean (over random validation/test split) test accuracy (in percent) and standard error on image classification datasets with injected label noise, comparing our method and baselines.}
  
% \end{table}

%#################
\section{Ablation Study}
\label{app:additionalablation}

\subsection{Using Training Set for Validation}
\label{abl:val}

In this experiment, we follow a common practice with respect to the validation data: we train our model on CIFAR-100 and TinyImageNet using only 90\% of the train data, use the remaining 10\% for validation, and finally retrain the model on the full train data while keeping the same hyper-parameters for inference. The results are almost identical to those reported in Table~\ref{table:regularnetworksKD}. This validates the robustness of our method to the (lack of) a separate validation set. 

\subsection{Number of Checkpoints}
\label{subsec:checkpointsnum}

Here we evaluate the cost entailed by the use of an ensemble at inference time. In Fig.~\ref{fig:ensemblesize} we report the improvement in test accuracy as compared to a single network, when varying the ensemble size. The results indicate that almost all of the improvement can be obtained using only $5-10\%$ of the checkpoints, making our method practical in real life.

\begin{figure}[htbp]
\vspace{-.5cm}
    \centering
    \subfloat[\resizebox{.4\linewidth}{!}{CIFAR-100}\label{subfig:cifar100enssize}]{
        \includegraphics[width=0.22\linewidth]{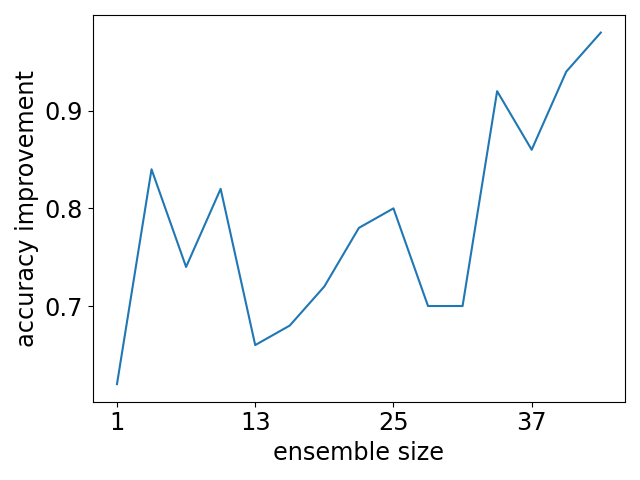}
    }\hfill
    \subfloat[\resizebox{.6\linewidth}{!}{TinyImageNet}\label{subfig:tinyenssize}]{
        \includegraphics[width=0.22\linewidth]{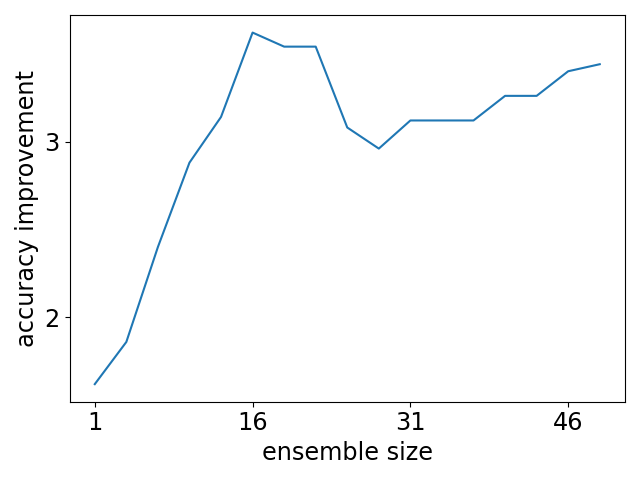}
    }\hfill
    \subfloat[\resizebox{.75\linewidth}{!}{20\% asym noise}\label{subfig:c100asym20enssize}]{
        \includegraphics[width=0.22\linewidth]{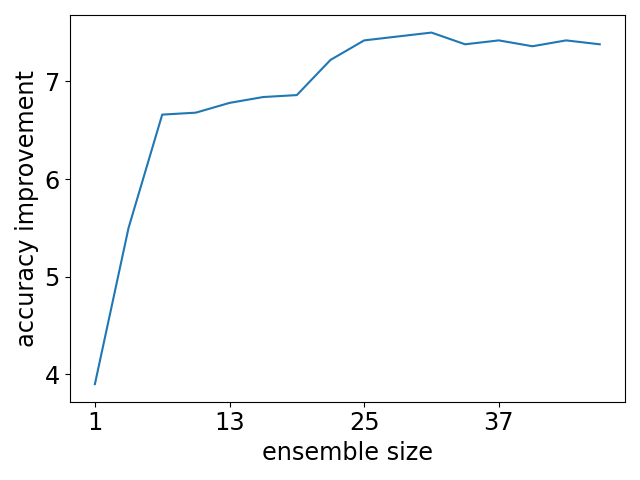}
    }\hfill
    \subfloat[\resizebox{.7\linewidth}{!}{20\% sym noise}\label{subfig:tinyenssym20size}]{
        \includegraphics[width=0.22\linewidth]{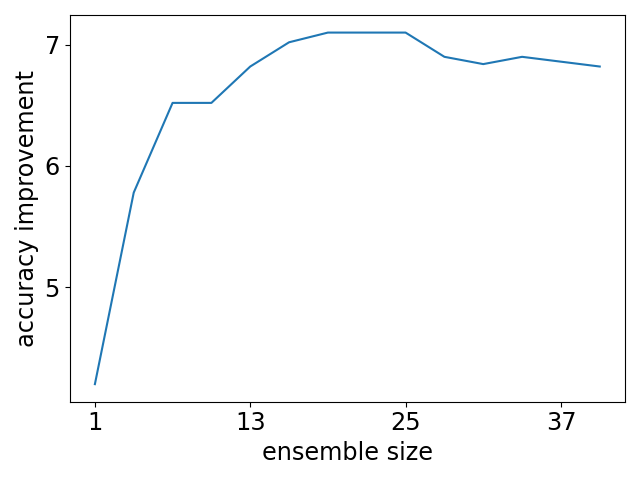}
    }

    \caption[ensemble size]{Improvement achieved by our method when using a different number of checkpoints (shown on the x-axis).}
    \label{fig:ensemblesize}
    \vspace{-0.5cm}
\end{figure}

\subsection{Optimal vs Sub-Optimal Training}
\label{abl:subopttrain}

In real life, full search for the optimal training scheme and hyper-parameters is not always possible, leading to sub-optimal performance. Interestingly, our method can be used to reduce the gap between optimal and sub-optimal training sessions, as seen in Table~\ref{table:suboptimal}, where this gap when training MaxViT over ImageNet is reduced by almost half when applying our method (on both models). 

\begin{table}[htbp]
\footnotesize
%\scriptsize
\caption{}
\label{table:suboptimal}
\begin{center}
\centering
  \begin{tabular}{l| c||c}
%    \toprule
    \multicolumn{1}{ c |}{training/\textbf{method}} & \multicolumn{1}{ c ||}{\textbf{original network}} & \multicolumn{1}{ c }{\textbf{KF}} 
    \\ 
    \toprule
    \emph{regular training}   & $82.5 \pm .1$ & $83.8 \pm .1$ \\
    \emph{sub-optimal training}   & $77.3 \pm .1$ & $81.0 \pm .1$\\
    \hline
    \emph{improvement}   & $\mathbf{5.2}$ &$\mathbf{2.8}$\\

%    \bottomrule
  \end{tabular}

\vspace{0.2cm}
% \raggedright
\end{center}
\footnotesize{Mean test accuracy (in percent) and ste, over random validation/test split. MaxViT is trained to classify ImageNet, comparing optimal and sub-optimal training with and without KF.} 
    \vspace{-0.25cm}  

\end{table}

\subsection{Fairness}
\label{abl:fairness}

In this section we study our method's effect on the model's \emph{fairness}, i.e. the effect non-relevant features have on the classification of test data examples. We follow \citep{wang2020towards} and train and test our models on datasets in which they might learn spurious correlations. To create those datasets, we divide the classes into two groups: in each class of the first group 95\% of the training images goes through a transformation (and the rest remain unchanged), and vice versa for the classes of the second group. The transformations we use are: removing color, lowering the images resolution (by down sampling and up sampling), and replacing images with downsampled images for the same class in ImageNet. We use cifar10 in our evaluation as done in  \citep{wang2020towards}, and use also CIFAR-100 with the remove color transformation (the rest of the transformations were less appropriate for this datasets, as it contains similar classes that could actually become harder to seperate at a lower resolution). We use the same method as before for our validation data, and thus the validation is of the same distribution as the test data.

Our evaluation uses the following metrics: (i) the test accuracy on two test sets (with/without the transformation), which should be lower if the model learns more spurious correlations, and (ii) the amplification bias defined in \citep{zhao2017men}, which is defined as follows:

\begin{equation}
    \frac{1}{|C|}\sum_{c \in C} \frac{max(c_T, c_N)}{c_T+c_N} - 0.5
\end{equation}

When $C$ is the group of classes, $c_T$ is the number of images from the transformed test set predicted to be of class c, and $c_N$ is the number of images from the natural test set predicted to be of class c - we would like those to be as close as possible, since the transformation shouldn't change the prediction, and thus the lower the score the better. 

The results of our evaluation are presented in Table~\ref{table:fairness}. Clearly, our method improves the average performance on both datasets without deteriorating the amplification bias, which indicates that our method has no negative effects on the model's fairness.

\subsection{Window Size}
\label{abl:window}

\begin{figure}[htbp]
    \centering
    % \subfloat[\resizebox{.8\linewidth}{!}{20\% sym noise}\label{subfig:0.2symnoiseC100}]{
    %     \includegraphics[width=0.22\linewidth]{figures/ablation_w_noise=0.2_sym.png}
    % }\hfill
    \subfloat[\resizebox{.6\linewidth}{!}{40\% sym noise}\label{subfig:0.4symnoiseC100}]{
        \includegraphics[width=0.3\linewidth]{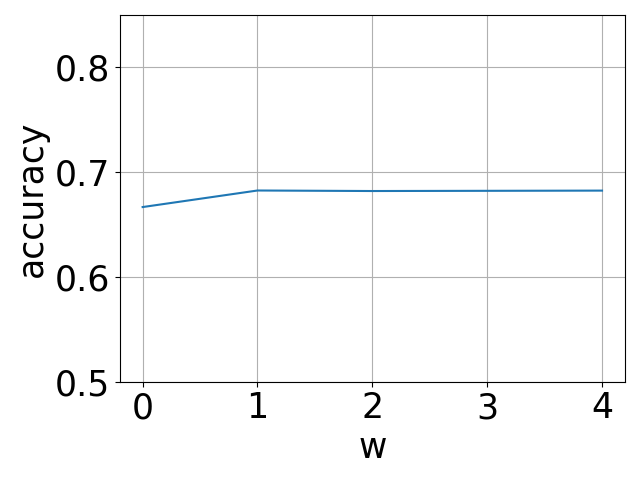}
    }\hfill
    \subfloat[\resizebox{.6\linewidth}{!}{20\% asym noise}\label{subfig:0.2asymnoiseC100}]{
        \includegraphics[width=0.3\linewidth]{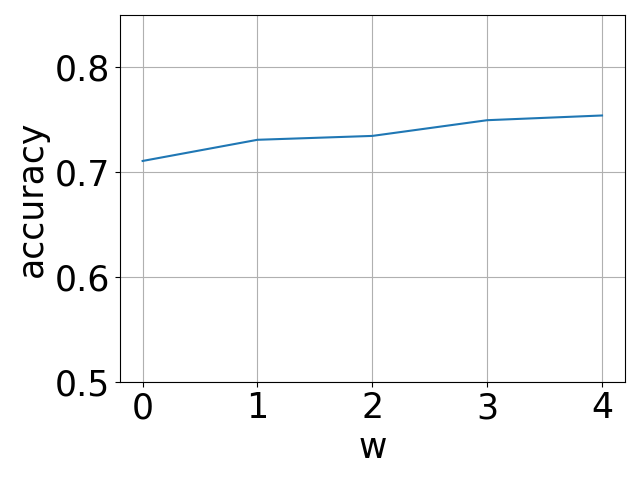}
    }\hfill
    \subfloat[\resizebox{.6\linewidth}{!}{40\% asym noise}\label{subfig:0.4asymnoiseC100}]{
        \includegraphics[width=0.3\linewidth]{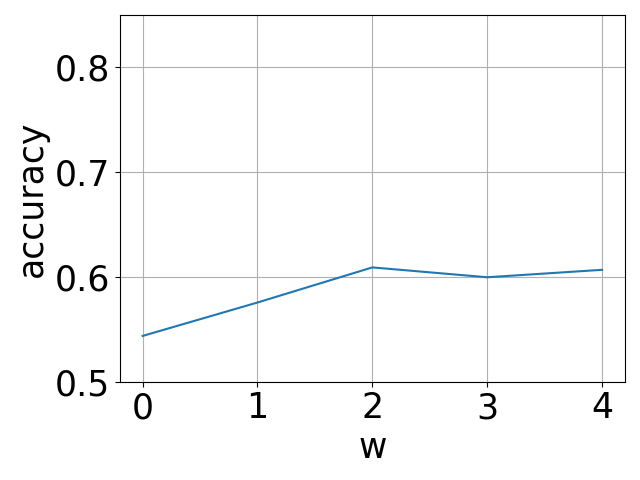}
    }

    \caption[windowsize]{Accuracy of the Knowledge Fusion method as a function of window size ($w$), evaluated on the CIFAR-100 dataset under varying noise levels. The results demonstrate that, irrespective of the noise level, the algorithm achieves near-optimal performance for $w\geq1$.} 
    \label{fig:windowsize}
\end{figure}

The proposed algorithm introduces a question regarding the selection of the window size ($w$) around the chosen epochs. In this work, we present results that illustrate the benefit of employing such window. Our findings, shown in Fig.~\ref{fig:windowsize}, demonstrate that in all evaluated scenarios, using a window greater than zero consistently outperforms the absence of a window.

\end{document}